\documentclass[a4paper]{scrartcl}
\usepackage[utf8]{inputenc}
\usepackage{latexsym}
\usepackage{amsmath}
\usepackage{amsthm}
\usepackage{amssymb}
\usepackage{graphicx}
\usepackage{ifthen}
\usepackage{calc}
\usepackage{stmaryrd}
\usepackage{enumitem}
\usepackage{hyperref}
\usepackage{alphalph}

\usepackage{tikz}
\usetikzlibrary{arrows}
\usetikzlibrary{shapes.geometric}
\usetikzlibrary{graphs}
\usetikzlibrary{calc}
\usetikzlibrary{3d}

\usepackage{eucal}

\DeclareFontFamily{OT1}{pzc}{}
\DeclareFontShape{OT1}{pzc}{m}{it}{<-> s * [1.10] pzcmi7t}{}
\DeclareMathAlphabet{\mathpzc}{OT1}{pzc}{m}{it}

\newtheoremstyle{mythm}%
  {}%
  {}%
  {\itshape}%
  {}%
  {\bfseries}%
  {.}%
  {.5em}%
  {\thmname{#1}~\thmnumber{#2}\ifthenelse{\equal{\thmnote{#3}}{}}{}{~(\thmnote{#3})}}%

\newtheoremstyle{mydefn}%
  {}%
  {}%
  {\upshape}%
  {}%
  {\bfseries}%
  {.}%
  {.5em}%
  {\thmname{#1}~\thmnumber{#2}\ifthenelse{\equal{\thmnote{#3}}{}}{}{~(\thmnote{#3})}}%

\newtheoremstyle{myremark}%
  {}%
  {}%
  {\upshape}%
  {}%
  {\itshape}%
  {.}%
  {.5em}%
  {\thmname{#1}~\thmnumber{#2}\ifthenelse{\equal{\thmnote{#3}}{}}{}{~(\thmnote{#3})}}%

\theoremstyle{mythm}
\newtheorem{theorem}{Theorem}[section]

\newtheorem{corollary}[theorem]{Corollary}

\theoremstyle{mydefn}

\newtheorem{example}[theorem]{Example}
\theoremstyle{myremark}

\theoremstyle{mythm}

\newcounter{claimcounter}

\newlist{caselist}{description}{10}
\setlist[caselist]{font=\itshape\mdseries}

\setenumerate[1]{label=(\arabic*)}
\newlist{eroman}{enumerate}{2}
\setlist[eroman,1]{label=(\roman*)}
\setlist[eroman,2]{label=(\alph*)}
\newlist{ealph}{enumerate}{1}
\setlist[ealph]{label=(\Alph*)}

\newcounter{nlistcounter}

\numberwithin{equation}{section}

\usepackage{color}
\definecolor{blau}{RGB}{0,84,159}
\definecolor{hellblau}{RGB}{142,168,229}
\definecolor{petrol}{RGB}{0,97,101}
\definecolor{tuerkis}{RGB}{0,152,161}
\definecolor{gruen}{RGB}{87,171,39}
\definecolor{maigruen}{RGB}{189,205,0}
\definecolor{gelb}{RGB}{255,237,0}
\definecolor{orange}{RGB}{255,128,0}
\definecolor{magenta}{RGB}{227,0,102}
\definecolor{rot}{RGB}{204,7,30}
\definecolor{bordeaux}{RGB}{161,16,53}
\definecolor{violett}{RGB}{97,33,88}
\definecolor{lila}{RGB}{122,111,172}
\definecolor{grey}{gray}{0.7}
\definecolor{mittelblau}{RGB}{0,128,255}

\definecolor{azure}{RGB}{6,154,243}
\definecolor{xindigo}{RGB}{56,2,130}
\definecolor{xorange}{RGB}{255,173,1}
\definecolor{xgreen}{RGB}{21,176,26}

\newcommand{\bigmid}{\mathrel{\big|}}
\newcommand{\Bigmid}{\mathrel{\Big|}}

\newcommand{\angles}[1]{\left\langle#1\right\rangle}

\renewcommand{\hat}{\widehat}

\renewcommand{\vec}[1]{\boldsymbol{#1}}

\renewcommand{\phi}{\varphi}
\renewcommand{\epsilon}{\varepsilon}

\newcommand{\Nat}{{\mathbb N}}
\newcommand{\Real}{{\mathbb R}}

\newcommand{\LC}{\textsf{\upshape C}}

\newcommand{\CC}{{\mathcal C}}

\newcommand{\CF}{{\mathcal F}}
\newcommand{\CG}{{\mathcal G}}

\newcommand{\CP}{{\mathcal P}}

\newcommand{\CT}{{\mathcal T}}

\newcommand{\CX}{{\mathcal X}}

\setlength{\marginparwidth}{2cm}
\usepackage[textwidth=2cm]{todonotes}

\usepackage{algorithmic}

\algsetup{linenodelimiter=.}

\usepackage{float}
\newfloat{algorithm}{ht}{alg}
\floatname{algorithm}{Algorithm}

\renewcommand{\hom}{\textsf{\upshape hom}}
\newcommand{\aut}{\textsf{\upshape aut}}
\newcommand{\epi}{\textsf{\upshape epi}}
\newcommand{\emb}{\textsf{\upshape emb}}
\newcommand{\Hom}{\textsf{\upshape Hom}}
\newcommand{\wl}{\textsf{\upshape wl}}
\newcommand{\CTD}{{\mathcal{TD}}}

\newcommand{\wt}{\operatorname{wt}}
\newcommand{\dist}{\operatorname{dist}}

\begin{document}
\title{word2vec, node2vec, graph2vec, X2vec: Towards a Theory of
  Vector Embeddings of Structured Data}
\author{Martin Grohe\\
\normalsize RWTH Aachen University}
\date{}
\maketitle

\begin{abstract}
Vector representations of graphs and relational structures, whether
hand-crafted feature vectors or learned representations, enable us to
apply standard data analysis and machine learning techniques to the
structures. A wide range of methods for generating such embeddings have
been studied
in the machine learning and knowledge representation
literature. However, vector embeddings have received relatively little
attention from a theoretical point of view.

Starting with a survey of embedding techniques that have been used in
practice, in this paper we propose two theoretical approaches that we
see as central for understanding the foundations of vector
embeddings. We draw connections between the various approaches and suggest
directions for future research.
\end{abstract}

\begin{figure}
\centering
\input{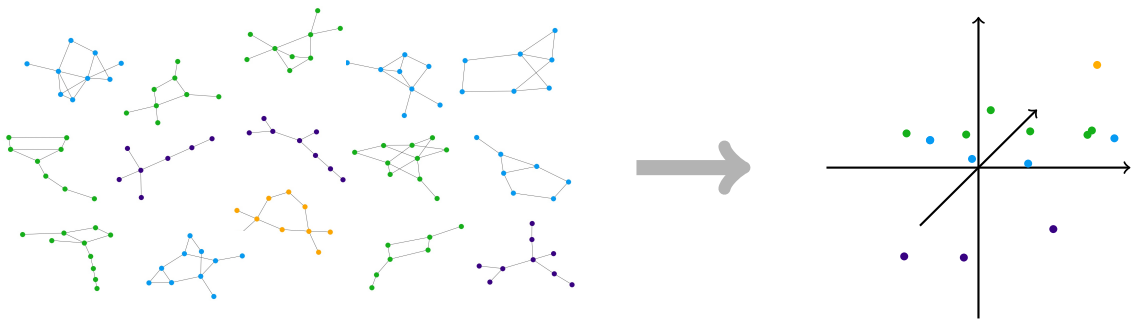}
   \caption{Embedding graphs into a vector space}
   \label{fig:teaser}
 \end{figure}

\section{Introduction}
Typical machine learning algorithms operating on structured data
require representations of the often symbolic data as numerical
vectors. Vector representations of the data range from handcrafted
feature vectors via automatically constructed graph kernels to learned
representations, either computed by dedicated embedding algorithms or
implicitly computed by learning architectures like graph neural
networks. The performance of machine learning methods crucially
depends on the quality of the vector representations. Therefore,
there is a wealth of research proposing a wide range of
vector-embedding methods for various applications.
Most of this research is empirical and often geared towards specific
application areas. Given the importance of the topic, there is
surprisingly little theoretical work on vector embeddings, especially
when it comes to representing structural information that goes beyond
metric information (that is, distances in a graph).

The goal of this paper is to give an overview over the various
embedding techniques for structured data that are used in practice and
to introduce theoretical ideas that can, and to some extent have been
used to understand and analyse them. The research landscape on vector
embeddings is unwieldy, with several communities working largely
independently on related
questions, motivated by different application areas such as social network
analysis, knowledge graphs, chemoinformatics, computational biology,
etc. Therefore, we need to be selective, focussing on common ideas and
connections where we see them.

Vector embeddings can bridge the gap between the ``discrete'' world
of relational data and the ``differentiable'' world of machine
learning and for this reason have a great potential for database
research. Yet relatively little work has been done on embeddings of
relational data beyond the binary relations of knowledge
graphs. Throughout the paper, I will try to point out potential
directions for database related research questions on vector embeddings.

A \emph{vector embedding} for a class $\CX$ of objects is a mapping
$f$ from $\CX$ into some vector space, called the \emph{latent space},
which we usually assume to be a real vector space $\Real^d$ of finite
dimension $d$.  The idea is to define a vector embedding in such a way
that geometric relationships in the latent space reflect semantic
relationships between the objects in $\CX$. Most importantly, we want
similar objects in $\CX$ to be mapped to vectors close to one another
with respect to some standard metric on the latent space (say,
Euclidean).  For example, in an embedding of words of a natural
language we want words with similar meanings, like ``shoe'' and
``boot'', to be mapped to vectors that are close to each other.
Sometimes, we want further-reaching correspondences between properties
of and relations between objects in $\CX$ and the geometry of their
images in latent space. For example, in an embedding $f$ of the
entities of a knowledge base, among them \texttt{Paris},
\texttt{France}, \texttt{Santiago}, \texttt{Chile}, we may want
$\vec t:=f(\texttt{Paris})-f(\texttt{France})$ to be (approximately)
equal to $f(\texttt{Santiago})-f(\texttt{Chile})$, so that the
relation $\texttt{is-capital-of}$ corresponds to the translation by
the vector $\vec t$ in latent space.

A difficulty is that the semantic relationships and similarities
between the objects in $\CX$ can rarely be quantified precisely. They
usually only have an intuitive meaning that, moreover, may be
application dependent. However, this is not necessarily a problem,
because we can learn vector representations in such a way that they
yield good results when we use them to solve machine learning tasks
(so-called \emph{downstream tasks}). This way, we never have to make
the semantic relationships explicit. As a simple example, we may use a
nearest-neighbour based classification algorithm on the vectors our
embedding gives us; if it performs well then the distance between
vectors must be relevant for this classification task. This way,
we can even use vector embeddings, trained to perform well on certain
machine learning tasks, to define semantically meaningful distance
measures on our original objects, that is, to define the distance
$\operatorname{dist}_f(X,Y)$ between objects $X,Y\in\CX$ to be
$\|f(X)-f(Y)\|$. We call $\operatorname{dist}_f$ the distance measure
\emph{induced} by the embedding $f$.

In this paper, the objects $X\in\CX$ we want to embed either are
graphs, possibly labelled or weighted, or more generally relational
structures, or they are nodes of a (presumably large) graph or more
generally elements or tuples appearing in a relational structure. When
we embed entire graphs or structures, we speak of \emph{graph
  embeddings} or \emph{relational structure embeddings}; when we 
embed only nodes or elements we speak of \emph{node embeddings}. These two
types of embeddings are related, but there are clear differences. Most
importantly, in node embeddings there are explicit relations
such as adjacency and derived relations such as distance between the
objects of $\CX$ (the nodes of a graph), whereas in graph embeddings all
relations between objects are implicit or ``semantic'', for example
``having the same number of vertices'' or''having the same girth''
(see Figure~\ref{fig:teaser}).

The key theoretical questions we will ask about vector embeddings of
objects in 
$\CX$ are
the following.
\begin{description}
\item[Expressivity:] Which properties of objects $X\in\CX$ are
  represented by the embedding? What is the meaning of the induced
  distance measure? Are there geometric properties of the latent space
  that represent meaningful relations on $\CX$?
\item[Complexity:] What is the computational cost of
  computing the vector embedding? What are efficient embedding
  algorithms? How can we efficiently retrieve semantic information of
  the embedded data, for example, answer queries?
\end{description}
A third question that relates to both expressivity and complexity is
what dimension to choose for the latent space. In general, we expect a
trade-off between (high) expressivity and (low) dimension, but it may
well be that there is an inherent dimension of the data set. It is an
appealing idea (see, for example, \cite{tensillan00}) to think of ``natural'' data sets appearing in practice as lying on a
low dimensional manifold in high dimensional space. Then we can regard
the dimension of this manifold as the inherent dimension of the data set.

Reasonably well-understood from a theoretical point of view are node
embeddings of graphs that aim to preserve distances between nodes,
that is, embeddings $f:V(G)\to\Real^d$ of the vertex set $V(G)$ of
some graph $G$ such that
$\operatorname{dist}_G(x,y)\approx\|f(x)-f(y)\|$, where
$\operatorname{dist}_G$ is the shortest-path distance in $G$. There is a
substantial theory of such \emph{metric embeddings} (see
\cite{linlonrab95}). In many applications of node embeddings, metric
embeddings are indeed what we need.

However, the metric is only one aspect of the information carried by a
graph or relational structure, and arguably not the most important one
from a database perspective. Moreover, if we consider graph embeddings
rather than node embeddings, there is no metric to start with. In this
paper, we are concerned with \emph{structural} vector embeddings of
graphs, relational structures, and their nodes. Two theoretical ideas that
have been shown to help in understanding and even designing vector
embeddings of structures are the \emph{Weisfeiler-Leman algorithm} and
various concepts in its context, and \emph{homomorphism vectors},
which can be seen as a general framework for defining ``structural''
(as opposed to ``metric'') embeddings. We will see that these
theoretical concepts have a rich theory that connects
them to the embedding techniques used in practice in various ways.

The rest of the paper is organised as follows. Section~\ref{sec:emb}
is a very brief survey of some of the embedding techniques that can be
found in the machine learning and knowledge representation
literature. In Section~\ref{sec:wl}, we introduce the Weisfeiler-Leman
algorithm. This algorithm, originally a graph isomorphism test, turns
out to be an important link between the embedding techniques described
in Section~\ref{sec:emb} and the theory of homomorphism vectors, which
will be discussed in detail in Section~\ref{sec:hom}. Finally,
Section~\ref{sec:sim} is devoted to a discussion of similarity
measures for graphs and structures.

\section{Embedding Techniques}
\label{sec:emb}

In this section, we give a brief and selective overview of embedding
techniques. More thorough recent surveys are \cite{hamyinles17a} (on node
embeddings), \cite{wupanche+19} (on graph neural networks), \cite{wanmaowanguo17} (on knowledge graph embeddings),
and \cite{krijohmor19} (on graph kernels).

\subsection{From Metric Embeddings to Node Embeddings}
\label{sec:node-emb}

Node embeddings can be traced back to the theory of embeddings of
finite metric spaces and dimensionality reduction, which have been
studied in geometry (e.g.\ \cite{bou85,johlin84}) and algorithmic
graph theory (e.g.~\cite{ind01,linlonrab95}). In statistics and data
science, well-known traditional methods of metric embeddings and
dimensionality reduction are multidimensional scaling~\cite{kru64},
Isomap~\cite{tensillan00}, and Laplacian eigenmap~\cite{belniy03}. More
recent related approaches are
\cite{ahmshenar+13,caoluxu15,oucuipei+16,tanquwan15}. The idea is
always to embed the nodes of a graph in such a way that the distance
(or similarity) between vectors approximates the distance (or
similarity) between nodes. Sometimes, this can be viewed as a matrix
factorisation. Suppose we have defined a similarity measure on the nodes of
our graph $G=(V,E)$ that is represented by a similarity matrix
$S\in\Real^{V\times V}$. In the simplest version, we can just take $S$
to be the adjacency matrix of the graph; in the literature this is
sometimes referred to a \emph{first-order proximity}. Another common
choice is $S=(S_{vw})$ with $S_{vw}:=\exp(-c\dist_G(v,w))$, where
$c>0$ is a parameter. We describe our
embedding of $V$ into $\Real^d$ by a matrix $X\in\Real^{V\times d}$ whose rows
$\vec x_v\in\Real^d$ are the images of the nodes. If we measure the
similarity between vectors $\vec x,\vec y$ by their normalised inner
product $\frac{\angles{\vec x,\vec y}}{\|\vec x\|\|\vec y\|}$ (this is
known
as the \emph{cosine similarity}), then
our objective is to find a matrix $X$ with normalised rows that
minimises $\|XX^\top-S\|_F$ with respect to the Frobenius norm
$\|\cdot\|_F$ (or any
other matrix norm, see Section~\ref{sec:sim}). In the basic version
with the Frobenius norm, the problem can be solved
using the singular value decomposition of $S$. 
In a more general
form, we compute a similarity matrix $\hat S\in\Real^{V\times
  V}$ whose $(v,w)$-entry quantifies the similarity between vectors
$\vec x_v,\vec x_w$ and minimise the distance between $S$ and $\hat S$, for example
using stochastic gradient descent. In \cite{hamyinles17a}, this
approach to learning node embeddings is described as an
\emph{encoder-decoder} framework.

\begin{figure}
  \centering
  \input{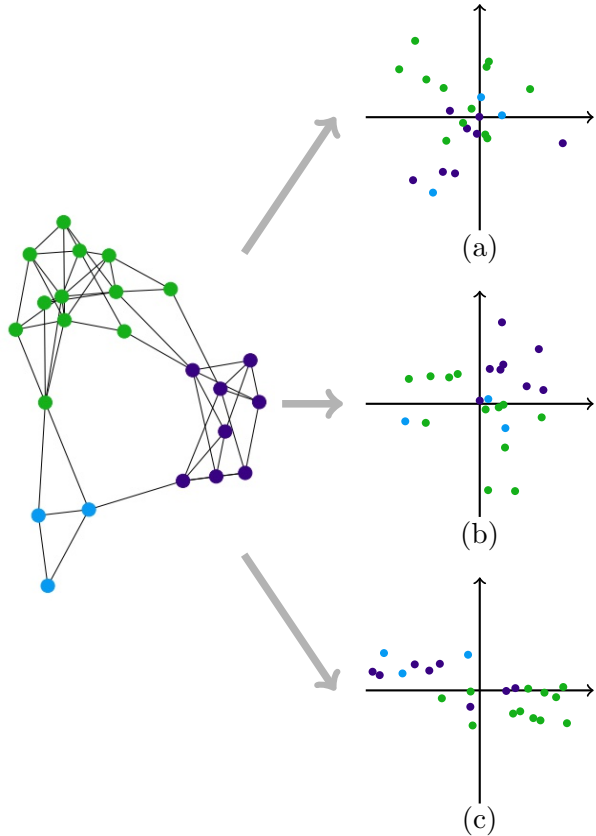}
  \caption{Three node embeddings of a graph using (a) singular value decomposition of
    adjacency matrix, (b) singular value decomposition of the similarity
    matrix with entries $\boldsymbol{S_{vw}=\exp(-2\dist(v,w))}$, (c) \textsc{node2vec} \cite{groles16}}
\label{fig:ne}
\end{figure}

Learned word embeddings and in particular the \textsc{word2vec} algorithm \cite{miksutche+13}
introduced new ideas that had huge impact in natural
language processing. These ideas also inspired new approaches to node
embeddings like \textsc{DeepWalk}~\cite{peralrski14} and
\textsc{node2vec}~\cite{groles16} based on taking short random walks in a graph and
interpreting the sequence of nodes seen on such random walks as if
they were words appearing together in a sentence. These approaches can still be described in the
matrix-similarity (or encoder-decoder) framework: as the similarity
between nodes $v$ and $w$ we take the probability that a fixed-length
random walk starting in $v$ ends in $w$. We can approximate this
probability by sampling random walks. Note that, even in undirected
graphs, this similarity measure is not necessarily symmetric.

From a deep learning perspective, the embedding methods described so
far are all ``shallow'' in that they directly optimise the output
vectors and there are no hidden layers; computing the vector $\vec
x_v$ corresponding to a node $v$ amounts to a table lookup. There are also deep learning methods
for computing node embeddings
(e.g.~\cite{caoluxu16,hamyinles17,wancuizhu16}). Before we discuss such
approaches any further, let us introduce graph neural networks as a
general deep learning framework for graphs that has received a lot of
attention in recent years.

\subsection{Graph Neural Networks}
\label{sec:gnn}
When trying to apply deep learning methods to graphs or relational
structures, we face two immediate difficulties: (1) we want the methods to
scale across different graph sizes, but standard feed-forward neural networks
have a fixed input size; (2) we want the methods to be isomorphism
invariant and not depend on a specific representation of the input
graph. Both points show that it is problematic to just feed the
adjacency matrix (or any other standard representation of a graph)
into a deep neural network in a generic ``end-to-end'' learning
architecture.

\emph{Graph neural networks} (GNNs) are a deep learning framework for
graphs that avoids both of these difficulties, albeit at the price of
limited expressiveness (see Section~\ref{sec:wl}).  Early forms of
GNNs were introduced in \cite{scagortso09,galmic10}; the version we
present here is based on \cite{kipwel17,hamyinles17,phatraphuven17}.
Intuitively, a GNN model can be thought of as a message passing
network over the input graph $G=(V,E)$. Each node $v$ has a state
$\vec x_v\in\Real^d$. Nodes can exchange messages along the edges of
$G$ and update their states. To specify the model, we need to specify
two functions: an \emph{aggregation} function that takes the current
states of the neighbours of a node and aggregates them into a single
vector, and an \emph{update} function that takes the aggregate value
obtained from the neighbours and the current state of the node as
inputs and computes the new state of the node. In a simple form, we
may take the following functions:
\begin{align}
\label{eq:1}
  \textsc{Aggregate}:&&\vec a_v^{(t+1)}\gets\sum_{w\in
                  N(v)}W_{\textsc{agg}}\cdot\vec x_w^{(t)},\\
\label{eq:2}
  \textsc{Update}:&&\vec x_v^{(t+1)}\gets\sigma\left(W_{\textsc{up}}\cdot
                  \begin{pmatrix}
                    \vec x_v^{(t)}\\\vec a_v^{(t+1)}
                  \end{pmatrix}
  \right),
\end{align}
where $W_{\textsc{agg}}\in\Real^{c\times d}$ and
$W_{\textsc{up}}\in\Real^{d\times(c+ d)}$ are learned parameter
matrices and $\sigma$ is a nonlinear ``activation'' function, for
example the ReLU (rectified linear unit) function
$\sigma(x):=\max\{0,x\}$ applied pointwise to a vector. It is important to note that the parameter
matrices $W_{\textsc{agg}}$ and $W_{\textsc{up}}$ do not depend on
the node $v$; they are shared across all nodes of a graph. This
parameter sharing allows
it to use the same GNN model for graphs of arbitrary sizes.

Of course, we can also use more complicated aggregation and update
functions. We only want these functions to be differentiable to be
able to use gradient descent optimisation methods in the training
phase, and we want the aggregation function to be symmetric in its
arguments $\vec x_w^{(t)}$ for $w\in N(v)$ to make sure that the GNN
computes a function that is isomorphism invariant. For example, in
\cite{tonritwolgro19} we use a linear aggregation function and an
update function computed by an LSTM (long short-term memory,
\cite{hocschmi97}), a specific recurrent neural network component that
allows it to ``remember'' relevant information from the sequence
$\vec x_v^{(0)}, \vec x_v^{(1)},\ldots, \vec x_v^{(t)}$.

The computation of such a GNN model starts from a initial
configuration $\big(\vec x_v^{(0)})_{v\in V}$ and proceeds through a
fixed-number $t$ of aggregation- and update-steps, resulting in a
final configuration $\big(\vec x_v^{(t)})_{v\in V}$. Note that this
configuration gives us a node embedding $v\mapsto \vec x_v^{(t)}$ of
the input graph. We can also stack several such GNN layers, each with
its own aggregation and activation function, on top of one another,
using the final configuration of each (but the last) layer as the initial
configuration of the following layer and the final configuration of the last layer as
the node embedding. As initial states, we can take constant vectors
like the all-ones vector for each node, or we can assign a random
initial state to each node. We can also use the initial state to
represent the node labels if the input graph is labelled.

To train a GNN for computing a node-embedding, in principle we can use
any of the loss functions used by the embedding techniques described
in Section~\ref{sec:node-emb}. The reader may wonder what advantage
the complicated GNN architecture has over just optimising the
embedding matrix $X$ (as the methods described in
Section~\ref{sec:node-emb} do). The main advantage is that the GNN method is \emph{inductive}, whereas
the previously described methods are \emph{transductive}. 
This means that
a GNN represents a function that
we can apply to arbitrary graphs, not just to the graph it was
originally trained on. So if the graph changes over time and, for
example, nodes are added, we do not have to re-train the embedding,
but just embed the new nodes using the GNN model we already have,
which is much more efficient. We can even apply the model to an
entirely new graph and still hope it gives us a reasonable embedding.
The most prominent example of an inductive node-embedding
tool based on GNNs is \textsc{GraphSage}~\cite{hamyinles17}.

Let me close this section by remarking that GNNs are used for all
kinds of machine learning tasks on graphs and not only to compute node
embeddings. For example, a GNN based architecture for graph
classification would plug the output of the GNN layer(s) into a standard
feedforward network (possibly consisting only of a single softmax
layer).

\subsection{Knowledge Graph and Relational Structure Embeddings}

Node embeddings of knowledge graphs have also been studied quite
intensely in recent years, remarkably by a community that seems almost
disjoint from that involved in the node embedding techniques described
in Section~\ref{sec:node-emb}. What makes knowledge graphs somewhat
special is that they come with labelled edges (or, equivalently, many
different binary relations) as well as labelled nodes. It is not
completely straightforward to adapt the methods of
Section~\ref{sec:node-emb} to edge- and vertex-labelled
graphs. Another important difference is in the objective function: the
methods of Section~\ref{sec:node-emb} mainly focus on the graph metric
(even though approaches based on random walks like \textsc{node2vec} are flexible and also
incorporate structural criteria). However, shortest-path distance is
less relevant in knowledge graphs.

Rather than focussing on distances, knowledge graph embeddings focus
on establishing a correspondence between the relations of the knowledge
graph and geometric relationships in the latent space. A very
influential algorithm, \textsc{TransE} \cite{borusugar+13} aims to associate a specific
translation of the latent space with each relation. Recall the example of
the introduction, where entities \texttt{Paris},
\texttt{France}, \texttt{Santiago}, \texttt{Chile} were supposed to be
embedded in such a way that
$\vec x_{\texttt{Paris}}-\vec x_{\texttt{France}}\approx
\vec x_{\texttt{Santiago}}-\vec x_{\texttt{Chile}}$, so that the relation $\texttt{is-capital-of}$ corresponds to the translation by
$\vec t:=\vec x_{\texttt{Paris}}-\vec x_{\texttt{France}}$. 

A different algorithm for mapping relations to geometric relationships is
\textsc{Rescal}~\cite{nictrekri11}. Here the idea is to
associate a bilinear form $\beta_R$ with each relation $R$ in such a
way that for all entities $v,w$ it holds that
$\beta_R(\vec x_v,\vec x_w)\approx 1$ if $(v,w)\in R$ and
$\beta_R(\vec x_v,\vec x_w)\approx 0$ if $(v,w)\not\in R$. We can
represent such a bilinear form $\beta_R$ by a matrix $B_R$ such that
$\beta_R(\vec x,\vec y)=\vec x^\top B_R\vec y$. Then the objective is
to minimise, simultaneously for all $R$, the term $\|XB_RX^\top-A_R\|$, where $X$ is the embedding matrix
with rows $\vec x_v$ and $A_R$ is the adjacency matrix
of the relation $R$. Note that this is a multi-relational
version of the matrix-factorisation approach described in
Section~\ref{sec:node-emb}, with the additional twist that we also
need to find the matrix $B_R$ for each relation $R$.

Completing our remarks on knowledge graph embeddings, we mention that
it is fairly straightforward to generalise the GNN based node
embeddings to (vertex- and edge-)labelled graphs and hence to
knowledge graphs \cite{schlikipblo+18}. 

While there is a large body of work on embedding knowledge graphs,
that is, binary relational structures, not much is known about
embedding relations of higher arities. Of course one approach to
embedding relational structures of higher arities is to transform them
into their binary incidence structures (see Section~\ref{sec:beyond}
for a definition) and then embed these using any
of the methods available for binary structures. Currently, I am not
aware of any empirical studies on the practical viability of this
approach.  An alternative
approach \cite{borshm17,borshm19} is based on the idea of treating the
rows of a table, that is, tuples in a relation, like sentences in
natural language and then use word embeddings to embed the entities.

\subsection{Graph Kernels}
\label{sec:kernel}

There are machine learning methods that operate on vector
representations of their input objects, but only use these vector
representations implicitly and never actually access the vectors. All
they need to access is the inner product between two vectors. For
reasons that will become apparent soon, such methods are known as
\emph{kernel methods}. Among them are support vector machines for
classification \cite{corvap95} and principal component analysis as
well as $k$-means clustering for unsupervised learning
\cite{schonsmomul97} (see \cite[Chapter~16]{shaben14} for background).

A \emph{kernel functions} for a set $\CX$ of objects is a binary
function $K:\CX\times\CX\to\Real$ that is symmetric, that is,
$K(x,y)=K(y,x)$ for all $x,y\in\CX$, and \emph{positive semidefinite},
that is, for all $n\ge 1$ and all $x_1,\ldots,x_n\in\CX$ the matrix
$M$ with entries $M_{ij}:=K(x_i,x_j)$ is positive semidefinite. Recall
that a symmetric matrix $M\in\Real^{n\times n}$ is positive
semidefinite if all its eigenvalues are nonnegative, or equivalently,
if $\vec x^TM\vec x\ge 0$ for all $\vec x\in\Real^n$. It can be shown
that a symmetric function $K:\CX\times\CX\to\Real$ is a kernel
function if and only if there is a vector embedding
$f:\CX\to\mathbb H$ of $\CX$ into some Hilbert space $\mathbb H$ such
that $K$ is the mapping induced by the inner product of $\mathbb H$,
that is, $K(x,y)=\angles{f(x),f(y)}$ for all $x,y\in X$ (see
\cite[Lemma~16.2]{shaben14} for a proof). For our purposes, it
suffices to know that a \emph{Hilbert space} is a potentially infinite
dimensional real vector space $\mathbb H$ with a a symmetric bilinear
form $\angles{\cdot,\cdot}:\mathbb H\times \mathbb H\to\Real$
satisfying $\angles{\vec x,\vec x}>0$ for all
$\vec x\in\mathbb H\setminus\{\vec0\}$. A difference between the
embeddings underlying kernels and most other vector embeddings is that
the kernel embeddings usually embed into higher dimensional spaces (even infinite
dimensional Hilbert spaces). Dimension does not play a big role for
kernels, because the embeddings are only used
implicitly. Nevertheless, it can sometimes be more efficient to use
the embeddings underlying kernels explicitly~\cite{krineumor19}.

Kernel methods have dominated machine learning on graphs for a long
time and, despite of the recent successes of GNNs, they are still
competitive. Quoting~\cite{krijohmor19}, \textit{``It remains a current challenge in research
to develop neural techniques for graphs that are able to learn feature representations
that are clearly superior to the fixed feature spaces used by graph
kernels.''}

The first dedicated graph kernels were the random walk graph kernels
\cite{garflawro03,kastsuino03} based on counting walks of all lengths. In some sense, they are
similar to the random walk based node-embedding algorithms described in
Section~\ref{sec:node-emb}. Other graph kernels are based on counting
shortest
paths, trees and cycles, and small subgraph patterns
\cite{borkri05,horgarwro04,ramgar03,shevispet+09}. The important Weisfeiler-Leman kernels~\cite{sheschlee+11}, which we will
describe in Section~\ref{sec:wl}, are based on aggregating local
neighbourhood information. All these kernels are defined for graphs
with discrete labels. The techniques, all essentially based on
counting certain subgraphs, are not directly applicable to graphs
with continuous labels such as edge weights. An adaptation to continuous
labels based on hashing is proposed in \cite{morkrikermut16}.

Let me remark that there are also \emph{node kernels} defined on the
nodes of a graph \cite{konlaf02,neugarker13,smokon03}. They implicitly give a vector
embedding of the nodes. However, compared to the node embedding
techniques discussed before, they only play a minor role. For graph embeddings, we are in the opposite situation: kernels are
the dominant technique. However, there are a few other approaches.

\subsection{Graph Embeddings}
\textsc{Graph2Vec}~\cite{narchaven+17} is a transductive approach to
embedding graphs inspired by \textsc{word2vec} and some of the node
embedding techniques discussed in Section~\ref{sec:node-emb}. As
before, ``transductive'' means that the embedding is computed for a
fixed set of graphs in form of an embedding matrix (or look-up table)
and thus it only yields an embedding for graphs known at training
time. For typical machine learning applications it is unusual to
operate on set of graphs that is fixed in advance, so inductive
embedding approaches are clearly preferable.

GNNs can also be used to embed entire graphs, in the simplest form by
just aggregating the embeddings of the nodes computed by the
GNN. \emph{Graph autoencoders} \cite{kipwel16,panhulon+18} give a way to train such graph
embeddings in an unsupervised manner. A more advanced GNN architecture
for learning graph embedding, 
based on capsule neural networks, is proposed in \cite{xinche19}.

\section{The Weisfeiler-Leman Algorithm}
\label{sec:wl}

We slightly digress from our main theme and introduce the
Weisfeiler-Leman algorithm, a very efficient combinatorial
partitioning algorithm that was originally introduced as a
fingerprinting technique for chemical molecules~\cite{mor65}. The
algorithm plays an important role in the graph isomorphism literature,
both in theory (for example, \cite{bab16,gro17}) and practice, where
it appears as a subroutine in all competitive graph isomorphism tools
(see \cite{mckpi14}). As we will see, the algorithm has interesting
connections with the embedding techniques discussed in the previous
section.
 
\subsection{1-Dimensional Weisfeiler-Leman}
The Weisfeiler-Leman algorithm has a parameter $k$, its \emph{dimension}.
We start by
describing the $1$-dimensional version \emph{$1$-WL}, which is also known as
\emph{colour refinement} or \emph{naive vertex classification}. The
algorithm computes a partition of the nodes of its input
graph. It is convenient to think of the classes of the partition as
colours of the nodes. A colouring (or partition) is \emph{stable} if
any two nodes $v,w$ of the same colour $c$ have the same number of
neighbours of any colour $d$. The algorithm computes a stable
colouring by iteratively refining an initial colouring as described in
Algorithm~\ref{alg:cr}. 
Figure~\ref{fig:colref} shows an example run
of $1$-WL.
The algorithm is very efficient; it can be implemented to %
run in time $O((n+m)\log n)$,
where $n$ is the number of vertices and $m$ the number of edges of the
input graph \cite{carcro82}. Under reasonable assumptions on the
algorithms used, this is best possible~\cite{berbongro17}.

\begin{algorithm}[h]
  \centering
  \fbox{
  \begin{minipage}{\columnwidth-3mm}
  \textsc{1-WL}
  \begin{description}
  \item[Input:] Graph $G$ 
  \item[Initialisation:]All nodes get the same colour.
  \item[Refinement Round:]For all colours $c$ in the current colouring
    and all nodes $v,w$ of colour $c$, the nodes $v$ and $w$ get different
    colours in the new colouring if there is some colour $d$ such that $v$ and $w$ have
    different numbers of neighbours of colour $d$.
  \end{description}
  The refinement is repeated until the colouring is stable, then the
  stable colouring is returned.
  \end{minipage}
}
  \caption{The 1-dimensional WL algorithm}
  \label{alg:cr}
\end{algorithm}

\begin{figure}
  \centering
\begin{tabular}{c@{\hspace{5mm}}c}
  \includegraphics[width=3.2cm]{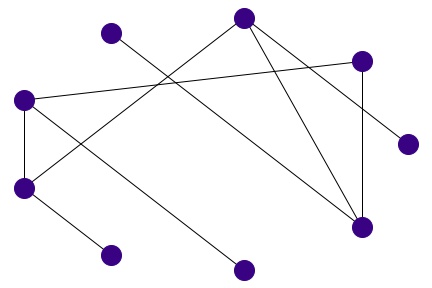}
  &
  \includegraphics[width=3.2cm]{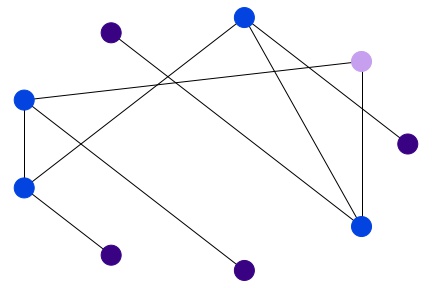}\\
  (a)~initial graph&(b)~colouring after round 1\\[1ex]
  \includegraphics[width=3.2cm]{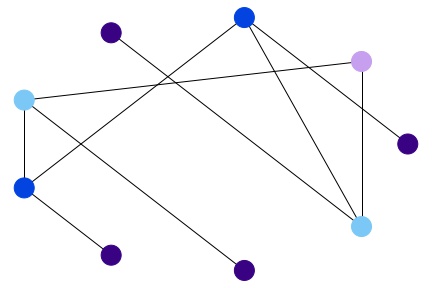}
  &
    \includegraphics[width=3.2cm]{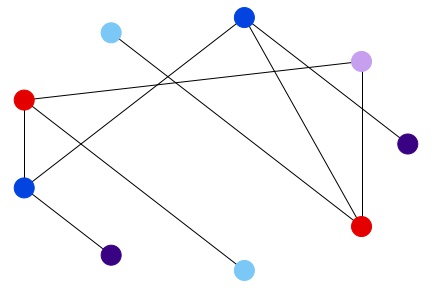}\\
    (c)~colouring after round 2&(d)~stable colouring after round 3
\end{tabular}
  \caption{A run of 1-WL}
  \label{fig:colref}
\end{figure}

To use $1$-WL as an isomorphism test, we note that the colouring computed by the algorithm is \emph{isomorphism
  invariant}, which means that if we run the algorithm on two isomorphic graphs
the resulting coloured graphs will still be isomorphic and in
particular have the same numbers of nodes of each colour. Thus, if we
run the algorithm on two graphs and find that they have distinct
numbers of vertices of some colour, we have produced a certificate of
non-isomorphism. If this is the case, we say that $1$-WL
\emph{distinguishes} the two graphs. Unfortunately, $1$-WL does not distinguish all
non-isomorphic graphs. For example, it does not distinguish a cycle of
length $6$ from the disjoint union of two triangles. 
But, remarkably, $1$-WL does
distinguish \emph{almost all} graphs, in a precise probabilistic sense
\cite{baberdsel80}. 

\subsection{Variants of 1-WL}
The version of $1$-WL we
have formulated is designed for undirected graphs. For directed graphs
it is better to consider in-neighbours and out-neighbours of nodes
separately. $1$-WL can easily be adapted to labelled graphs. If vertex labels are
present, they can be incorporated in the initial colouring: two
vertices get the same initial colour if and only if they have the same
label(s). We can incorporate edge labels in the refinement rounds: two nodes $v$ and $w$ get
different colours in the new colouring if there is some colour $d$ and
some edge label $\lambda$ such that $v$ and $w$ have a different
number of $\lambda$-neighbours of colour $d$. 

However, if the edge labels are real numbers, which we interpret as
edge \emph{weights}, or more generally elements of an arbitrary
commutative monoid, then we can also use the following weighted
version of 1-WL due to \cite{grokermla+14}. Instead of refining by the
number of edges into some colour, we refine by the sum of the edge
weights into that colour. Thus the refinement round of
Algorithm~\ref{alg:cr} is modified as follows: for all colours $c$ in
the current colouring and all nodes $v,w$ of colour $c$, $v$ and $w$
get different colours in the new colouring if there is some colour $d$
such that
\begin{equation}
  \label{eq:3}
  \sum_{x\text{ of colour }d}\alpha(v,x)\quad\neq\sum_{x\text{ of colour
  }d}\alpha(w,x),
\end{equation}
where $\alpha(x,y)$ denotes the weight of the edge from $x$ to $y$,
and we set $\alpha(x,y)=0$ if there is no edge from $x$ to $y$. %
This idea also allows us to define $1$-WL
on matrices: with a matrix $A\in\Real^{m\times n}$ we
associate a weighted bipartite graph with vertex set
$\{v_1,\ldots,v_m,w_1,\ldots,w_n\}$ and edge weights
$\alpha\big(v_i,w_j):=A_{ij}$ and
$\alpha(v_i,v_{i'})=\alpha(w_{j},w_{j'})=0$ and run weighted 1-WL on
this weighted graph with initial colouring that distinguishes the
$v_i$ (rows) from the $w_j$ (columns). An example is shown in
Figure~\ref{fig:matrixWL}.  This matrix-version of WL was applied in
\cite{grokermla+14} to design a dimension reduction techniques that
speeds up the solving of linear programs with many symmetries (or
regularities).

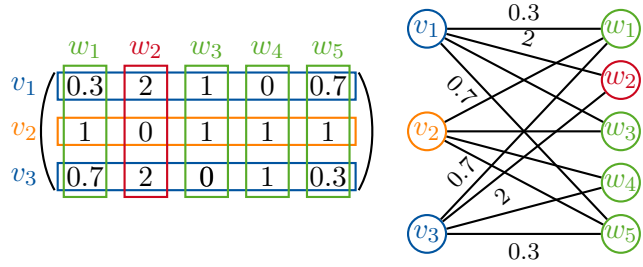
\begin{figure}
  \centering
       \begin{tikzpicture}
        [ 
        thick,
        vertex/.style={draw,circle,inner sep=0pt,minimum
          size=5mm},
        ] 

        \begin{scope}[x=0.8cm,y=0.6cm]
        \foreach \i/\j/\a in { 1/1/{0.3},1/2/2,1/3/1,1/4/0,1/5/{0.7},2/1/1,2/2/0,2/3/1,2/4/1,2/5/1,3/1/0.7,3/2/2,3/3/0,3/4/1,3/5/0.3,3/3/0}
           \path (\j-1,-\i+2) node {$\a$};
           
           \draw (-0.5,1.3) .. controls (-0.8,0.6) and (-0.8,-0.6) .. (-0.5,-1.3);
           \draw (4.5,1.3) .. controls (4.8,0.6) and (4.8,-0.6)
           .. (4.5,-1.3);

           \path (-1,1) node {\color{blau}$v_1$};
           \draw[blau] (-0.45,0.7) rectangle (4.45,1.3);

           \path (-1,0) node {\color{orange}$v_2$};
           \draw[orange] (-0.45,-0.3) rectangle (4.45,0.3);

           \path (-1,-1) node {\color{blau}$v_3$};
           \draw[blau] (-0.45,-1.3) rectangle (4.45,-0.7);

           \path (0,1.8) node {\color{gruen}$w_1$};
           \draw[gruen] (-0.35,-1.45) rectangle (0.35,1.45);

           \path (1,1.8) node {\color{rot}$w_2$};
           \draw[rot] (0.65,-1.45) rectangle (1.35,1.45);

           \path (2,1.8) node {\color{gruen}$w_3$};
           \draw[gruen] (1.65,-1.45) rectangle (2.35,1.45);

           \path (3,1.8) node {\color{gruen}$w_4$};
           \draw[gruen] (2.65,-1.45) rectangle (3.35,1.45);

           \path (4,1.8) node {\color{gruen}$w_5$};
           \draw[gruen] (3.65,-1.45) rectangle (4.35,1.45);

         \end{scope}
           
        \begin{scope}[xshift=4.5cm,scale=0.85]
        \draw (0,1.6) node[vertex,blau]  (r1) {$v_1$}
              (0,0) node[vertex,orange] (r2) {$v_2$}
              (0,-1.6) node[vertex,blau] (r3) {$v_3$}
              (3,1.6) node[vertex,gruen] (c1) {$w_1$}
              (3,0.8) node[vertex,rot] (c2) {$w_2$}
              (3,0) node[vertex,gruen] (c3) {$w_3$}
              (3,-0.8) node[vertex,gruen] (c4) {$w_4$}
              (3,-1.6) node[vertex,gruen] (c5) {$w_5$}
              ;

       \footnotesize
       \draw (r1) edge node[above] {$0.3$} (c1)
                  edge node[above,sloped] {$2$} (c2)
                  edge (c3)
                  edge node[below,sloped,pos=0.2] {$0.7$} (c5)
       ;
       \draw (r2) edge (c1)
                  edge (c3)
                  edge (c4)
                  edge (c5)
       ;
       \draw (r3) edge node[above,sloped,pos=0.2] {$0.7$} (c1)
                  edge node[below,sloped,pos=0.3] {$2$} (c2)
                  edge node[below] {} (c4)
                  edge node[below] {$0.3$} (c5)
       ;
       \end{scope}
    \end{tikzpicture}

  \caption{Stable colouring of a matrix and the corresponding weighted
    bipartite graph computed by matrix WL}
  \label{fig:matrixWL}
\end{figure}

\subsection{Higher-Dimensional WL}
For this paper, the $1$-dimensional version of the  Weisfeiler-Leman
algorithm is the most relevant, but let us briefly describe the higher
dimensional versions. In fact, it is the 2-dimensional version, also
referred to as \emph{classical WL}, that was introduced by Weisfeiler
and Leman \cite{weilem68} in 1968 and gave the algorithm its
name. The \emph{$k$-dimensional
  Weisfeiler-Leman algorithm ($k$-WL)} is based on the same
iterative-refinement idea as $1$-WL. However, instead of vertices, $k$-WL colours $k$-tuples of vertices
of a graph. Initially, each $k$-tuple is ``coloured'' by the
isomorphism type of the subgraph it induces. Then in the refinement
rounds, the colour information is propagated between ``adjacent''
tuples that only differ in one coordinate (details can be found in
\cite{caifurimm92}). 
If implemented using similar ideas
as for $1$-WL, $k$-WL runs in time $O(n^{k+1}\log n)$~\cite{immlan90}.

Higher-dimensional WL is much more powerful than $1$-WL, but Cai, Fürer, and
Immerman~\cite{caifurimm92} proved that for every $k$ there are
non-isomorphic graphs $G_k,H_k$ that are not distinguished by
$k$-WL. These graphs, known as the \emph{CFI graphs}, have
size $O(k)$ and are 3-regular.

\textsc{DeepWL}, a WL-Version of unlimited dimension that can
distinguish the CFI-graphs in polynomial time, was recently introduced
in \cite{groschwewie20b}.

\subsection{Logical and Algebraic Characterisations}
\label{sec:log_alg}
The beauty of the WL algorithm lies in the fact that its
expressiveness has several natural and completely unrelated
characterisations. Of these, we will see two in this section. Later,
we will see two more characterisations in terms of GNNs and homomorphism numbers.

The logic $\LC$
is the extension of first-order logic by counting quantifiers of the
form $\exists^{\ge p}x$ (``there exists at least $p$ elements
$x$''). Every $\LC$-formula is equivalent to a formula of plain
first-order logic. However, here we are interested in fragments of
$\LC$ obtained by restricting the number of
variables of formulas, and the translation from $\LC$ to first-order
logic may increase the number of
variables. For every $k\ge 1$, by $\LC^k$ we denote the fragment of
$\LC$ consisting of all formulas with at most $k$ (free or bound)
variables. The finite variable logics $\LC^k$ play an important role in
finite model theory (see, for example, \cite{grakollib+07}). Cai, Fürer, and Immerman
\cite{caifurimm92} have related these fragments to the WL
algorithm.

\begin{theorem}[\cite{caifurimm92}]\label{theo:cfi}
  Two graphs are $\LC^{k+1}$-equivalent, that is, they satisfy the
  same sentences of the logic $\LC^{k+1}$, if and only if $k$-WL does
  not distinguish the graphs.
\end{theorem}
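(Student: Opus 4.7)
The plan is to follow the standard two-way argument that links the iterative $k$-WL refinement to the inductive construction of formulas of $\LC^{k+1}$, with the crucial idea that refining one coordinate of a $k$-tuple corresponds to reusing a single extra variable.

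For the easier direction I would argue by induction on the refinement round $r$: for each colour $c$ that $k$-WL assigns to some $k$-tuple after round $r$, I would construct a formula $\phi_{c,r}(x_1,\ldots,x_k)$ of $\LC^{k+1}$ whose free variables are among $x_1,\ldots,x_k$ and whose extension in any graph $G$ is exactly the set of $k$-tuples of colour $c$ at round $r$. The base case $r=0$ is quantifier-free: the initial colour of $\bar u$ is the isomorphism type of the induced coloured order on its entries, which is definable by a Boolean combination of equalities and edge atoms over $x_1,\ldots,x_k$. For the inductive step, the colour of $\bar u$ at round $r+1$ is determined by its colour at round $r$ together with, for each coordinate $i\in\{1,\ldots,k\}$ and each round-$r$ colour $d$, the number of $w\in V(G)$ for which $\bar u[i/w]$ has colour $d$. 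Re-using the $(k+1)$-st variable as a fresh bound variable substituted into position $i$, this count is expressed by $\exists^{\ge p} x_{k+1}\,\phi_{d,r}(x_1,\ldots,x_{i-1},x_{k+1},x_{i+1},\ldots,x_k)$; conjoining the appropriate such formulas across all thresholds $p$, colours $d$ and positions $i$ yields $\phi_{c,r+1}$. After the polynomially many rounds needed for stabilisation, if $k$-WL distinguishes $G$ from $H$ there is a stable colour $c$ whose number of $k$-tuples differs in the two graphs, and the $\LC^{k+1}$-sentence counting these tuples (itself built using only $k+1$ variables by the standard trick) separates $G$ and $H$.

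For the converse I would pass through the bijective $(k+1)$-pebble game of Hella: two graphs are $\LC^{k+1}$-equivalent iff Duplicator has a winning strategy in the infinite bijective $(k+1)$-pebble game. I would then show that non-distinguishability by $k$-WL supplies such a strategy. Let $\mathrm{col}$ denote the stable $k$-WL colouring. The winning invariant is that at every position the $k$ tuples $\bar u$ of pebbled vertices of $G$ and $\bar v$ of $H$ satisfy $\mathrm{col}(\bar u)=\mathrm{col}(\bar v)$. When Spoiler picks up pebble $i$ and demands a bijection $\beta\colon V(G)\to V(H)$, Duplicator needs $\beta$ such that $\mathrm{col}(\bar u[i/w])=\mathrm{col}(\bar v[i/\beta(w)])$ for all $w\in V(G)$. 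The existence of such a $\beta$ is precisely the statement that the multisets $\lmulti \mathrm{col}(\bar u[i/w]) : w\in V(G)\rmulti$ and $\lmulti \mathrm{col}(\bar v[i/w']) : w'\in V(H)\rmulti$ agree, and this follows from stability: if they disagreed, a further refinement of $\mathrm{col}$ would split the colour class of $\bar u$ and $\bar v$, contradicting stability; together with the fact that globally the multisets of $k$-WL colours on $k$-tuples in $G$ and $H$ agree, this lets Duplicator maintain the invariant and win every round.

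The main obstacle is the converse direction, and within it the careful matching between the $(k+1)$-st pebble and the ``varying one coordinate'' move of $k$-WL: one must use that the stable colouring is fixed under exactly the operation that the game's pebble-swap allows, and one must verify that Hella's game genuinely captures $\LC^{k+1}$-equivalence on finite structures so that Duplicator-wins translates back into preservation of $\LC^{k+1}$-sentences. The first direction, by contrast, is a fairly mechanical variable-reuse translation, and the base case and counting quantifier bookkeeping are the only places requiring any care.
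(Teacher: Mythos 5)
The paper does not prove this theorem --- it is stated with a citation to Cai, F\"urer and Immerman and the text moves on immediately --- so there is no internal argument to compare against. Your two-pronged plan (inductive formula construction with variable reuse for one direction, Hella's bijective $(k+1)$-pebble game for the other) is the standard route and correct in outline.

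One substantive point runs through both halves of your sketch. You describe the $k$-WL refinement in per-coordinate form: for each $i$ and each colour $d$, count the $w$ with $\mathrm{col}(\bar u[i/w])=d$. The CFI refinement is finer: it refines by the \emph{joint type}, i.e.\ for each $k$-tuple of colours $(d_1,\ldots,d_k)$, the number of $w$ such that $\mathrm{col}(\bar u[m/w])=d_m$ holds \emph{simultaneously} for all $m\in\{1,\ldots,k\}$. This is not cosmetic. In the $(k+1)$-pebble game, once Spoiler has placed the freed pebble on $w$ and $\beta(w)$, the next round may pick up \emph{any} of the remaining pebbles; so the single bijection $\beta$ that Duplicator commits to must satisfy $\mathrm{col}(\bar u[m/w])=\mathrm{col}(\bar v[m/\beta(w)])$ for every $m$ and every $w$ at once. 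It is exactly stability of the joint refinement that guarantees such a simultaneous $\beta$; per-coordinate stability only gives you a separate $\beta_m$ for each coordinate, which does not let Duplicator maintain the invariant over the following round. The logical side is easy to repair at no extra cost in variables: take $\exists^{\ge p}x_{k+1}\,\bigwedge_{m=1}^{k}\phi_{d_m,r}(x_1,\ldots,x_{m-1},x_{k+1},x_{m+1},\ldots,x_k)$, reusing $x_m$ as the inner bound variable of the $m$-th conjunct. Two smaller gaps worth flagging explicitly: colour comparison between a tuple in $G$ and one in $H$ requires running $k$-WL on the disjoint union $G\sqcup H$ (or with a shared palette) so that stability is a statement about both graphs at once; and the final step --- turning a discrepancy between the multisets of stable $k$-tuple colours into a single separating $\LC^{k+1}$-\emph{sentence} using only $k+1$ variables, by nested counting over successively shorter prefixes --- is a small lemma in its own right, not merely ``the standard trick.''
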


Let us now turn to an algebraic characterisation of WL. Our starting
point is the observation that two graphs $G,H$ with vertex sets $V,W$
and adjacency matrices $A\in\Real^{V\times
  V},B\in\Real^{W\times W}$ are isomorphic if and
only there is a permutation matrix $X\in\Real^{V\times W}$ such that
$X^\top AX=B$. Recall that a permutation matrix is a $\{0,1\}$-matrix
that has exactly one $1$-entry in each row and in each column. Since
permutation matrices are orthogonal (i.e., they satisfy
$X^\top=X^{-1}$), we can rewrite this as $AX=XB$, which has the
advantage of being linear. This corresponds to the following linear
equations in the variables $X_{vw}$, for $v\in V$ and $w\in W$:
\begin{align}
  \label{eq:4}
  \sum_{v'\in V}A_{vv'}X_{v'w}&=\sum_{w'\in W}X_{vw'}B_{w'w}&\text{for
                                                              all
                                                              }v\in
                                                              V,w\in W.
\intertext{%
We can add equations expressing that the row and column sums of the
matrix $X$ are $1$, which implies that $X$ is a permutation matrix if
the $X_{vw}$ are nonnegative integers.
}
\label{eq:5}
  \sum_{w'\in W}X_{vw'}&=\sum_{v'\in V}X_{v'w}=1&\text{for
                                                              all
                                                              }v\in
                                                              V,w\in W.
\end{align}
Obviously, equations \eqref{eq:4} and \eqref{eq:5} have a nonnegative
integer solution if and only if the graphs $G$ and $H$ are
isomorphic. This does not help much from an algorithmic point of view,
because it is NP-hard to decide if a system of linear equations and
inequalities has an integer solution. But what about nonnegative
rational solutions? We know that we can compute them in polynomial time. A
nonnegative rational solution to \eqref{eq:4} and \eqref{eq:5}, which
can also be seen as a doubly stochastic matrix satisfying $AX=XB$, is
called a \emph{fractional isomorphism} between $G$ and $H$. If such a
fractional isomorphism exists, we say that $G$ and $H$ are \emph{fractionally isomorphic}.
Tinhofer~\cite{tin91} proved the following theorem.

\begin{theorem}[\cite{tin91}]\label{theo:fractional}
  Graphs $G$ and $H$ are fractionally isomorphic if and only if $1$-WL
  does not distinguish $G$ and $H$.
\end{theorem}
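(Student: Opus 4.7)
The ``iff'' splits into two separate arguments.

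\emph{Direction ($\Leftarrow$)} is constructive. I would run $1$-WL on $G\sqcup H$, let $C_1,\dots,C_r$ be the stable colour classes, write $C_c^G:=C_c\cap V(G)$, $C_c^H:=C_c\cap V(H)$, and set $a_c:=|C_c^G|=|C_c^H|$ (equality is the hypothesis). Define $X\in\Real^{V(G)\times V(H)}$ block-constantly by $X_{vw}:=1/a_c$ when $v\in C_c^G$ and $w\in C_c^H$, and $X_{vw}:=0$ otherwise. Double stochasticity is immediate. For $AX=XB$ I would expand both sides at $(v,w)$ with $v\in C_{c_1}^G$, $w\in C_{c_2}^H$: equitability of the partition restricted to $G$ (respectively $H$), which follows from joint stability on $G\sqcup H$, reduces $(AX)_{vw}$ to $n(c_2,c_1)/a_{c_2}$ and $(XB)_{vw}$ to $n(c_1,c_2)/a_{c_1}$, where $n(c,c')$ is the common number of $c$-coloured neighbours of any $c'$-coloured vertex. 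These agree by the edge-double-counting identity $a_{c_1}\,n(c_2,c_1)=a_{c_2}\,n(c_1,c_2)$.

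\emph{Direction ($\Rightarrow$)} proceeds by averaging the given fractional isomorphism into one that respects the stable partitions, and then extracting the matching of class sizes from its block structure. Let $E^G$ be the doubly-stochastic projector onto functions constant on the stable $1$-WL classes of $G$, namely $E^G_{vv'}=1/|C(v)|$ when $v,v'$ lie in the same stable class of $G$ and $0$ otherwise, and define $E^H$ analogously on $H$. Since the stable partitions are equitable and $A,B$ are symmetric, a routine calculation using the handshake identity inside each graph shows that $AE^G=E^GA$ and $BE^H=E^HB$. Setting $\bar X:=E^GXE^H$, the matrix $\bar X$ is a product of doubly-stochastic matrices hence doubly stochastic, and $A\bar X = E^GAXE^H = E^GXBE^H = \bar XB$, so $\bar X$ is still a fractional isomorphism. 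By construction $\bar X$ is block-constant: $\bar X_{vw}=\alpha(c,d)$ for $v\in C_c^G,\, w\in C_d^H$. Writing $\Theta(c,d):=\alpha(c,d)\,a_c\,b_d$ for the total mass transported from $C_c^G$ to $C_d^H$, the row/column sum constraints become $\sum_d\Theta(c,d)=a_c$ and $\sum_c\Theta(c,d)=b_d$, and $A\bar X=\bar X B$ collapses at the quotient level to the intertwining $N^\top\alpha=\alpha N$, where $N$ is the quotient matrix of the stable partition (the \emph{same} matrix for $G$ and $H$ by joint stability on $G\sqcup H$). Combining the non-negativity of $\Theta$, the marginals, the intertwining, and $|V(G)|=|V(H)|$ (forced since $X$ is square doubly stochastic), one then deduces $a_c=b_c$ for every $c$, which is exactly the $1$-WL-indistinguishability of $G$ and $H$.

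\emph{Main obstacle.} The final combinatorial step — extracting $a_c=b_c$ from ``$\Theta\ge 0$, row sums $a_c$, column sums $b_d$, and $N^\top\alpha=\alpha N$'' — is the one place where the argument does real work. The handshake identity inside each graph already forces $a_c/a_{c'}=n(c,c')/n(c',c)=b_c/b_{c'}$ whenever $n(c',c)\neq 0$, so on each connected component of the support graph of $N$ the vectors $(a_c)$ and $(b_c)$ differ only by a common scalar. The subtle point is to rule out different scalars on different components, which is exactly where the existence of a non-negative $\Theta$ with the prescribed marginals (and not merely the intertwining, which is the same on both graphs by joint stability) becomes indispensable. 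An alternative route with a different delicate step would invoke Dvo\v{r}\'ak's characterisation of $1$-WL equivalence by equality of all tree-homomorphism counts $\hom(T,G)=\hom(T,H)$, proved by structural induction on $T$ using $X^\top A=BX^\top$ and $X^\top\mathbf{1}=\mathbf{1}$; there the hard point is that the pointwise product appearing at each internal node of $T$ does not commute with the linear map $X^\top$, so double stochasticity must be invoked a second time to control the tree branching.
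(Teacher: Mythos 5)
Your $(\Leftarrow)$ direction is correct and is the standard constructive argument: the block-constant $X$ is doubly stochastic because $1$-WL equivalence forces the common class sizes $a_c$, and $AX=XB$ reduces pointwise to the double-counting identity $a_{c_1}\,n(c_2,c_1)=a_{c_2}\,n(c_1,c_2)$ of the stable equitable partition. The paper states Theorem~\ref{theo:fractional} without proof (citing Tinhofer), so there is no paper proof to compare against, only the folklore one.

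Your $(\Rightarrow)$ direction has a genuine gap exactly where you flag it, and the sketch you offer does not close it. The averaging $\bar X := E^G X E^H$ is sound: $E^G$ and $E^H$ commute with $A$ and $B$ because the stable partitions are equitable, so $\bar X$ is a block-constant doubly stochastic matrix with $A\bar X=\bar X B$. But the deduction of $a_c=b_c$ is not established. You say the quotient matrix $N$ is ``the same for $G$ and $H$ by joint stability on $G\sqcup H$''; that is true only for colours occurring in both graphs. Write $S_G$, $S_H$ for the sets of stable colours occurring in $G$, resp.\ $H$. When $1$-WL \emph{does} distinguish $G$ and $H$ --- the very case you must exclude --- there can be a colour $c$ with $a_c>0$ and $b_c=0$, and then the whole connected component of $c$ in the support of $N$ lies in $S_G\setminus S_H$ (if $n(c',c)>0$ and $c'\in S_H$, the $c'$-vertices of $H$ would have $c$-coloured neighbours, forcing $c\in S_H$). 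On such a component your handshake identity determines $a_c/a_{c'}$ within $G$, but on the $H$ side every $b$-value is $0$, so there is no ``common scalar'' to reconcile; what you actually have to prove is that no such component exists, i.e.\ $S_G=S_H$, and that is equivalent to the statement being proved. With $S_G\neq S_H$ the intertwining becomes $N_G\alpha=\alpha N_H$ between two \emph{different} matrices, and extracting $S_G=S_H$ and $\vec a=\vec b$ from nonnegativity, the transport marginals, and that intertwining is precisely the hard content of the direction --- not a routine wrap-up. The usual way to do it is an induction on WL rounds, showing that $X_{vw}>0$ forces $v,w$ to share the round-$t$ colour for every $t$, by applying $AX\mathbf{1}_{Q_d}=XB\mathbf{1}_{Q_d}$ within each current colour block and a majorisation/extremality argument for doubly stochastic matrices; your plan of averaging once against the final stable partition simply postpones all of that work into the one step left unproved. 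The alternative via Dvo\v{r}\'ak's theorem that you mention is also not a fix: the implication ``fractionally isomorphic $\Rightarrow \Hom_{\CT}(G)=\Hom_{\CT}(H)$'' is, in this paper, obtained by combining Theorem~\ref{theo:treewidth} with Theorem~\ref{theo:fractional} itself, so invoking it here would be circular, and the Hadamard-product obstruction you note at internal tree nodes is indeed real and is not dissolved merely by appealing to double stochasticity a second time.
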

 
A corresponding theorem also holds for the weighted and the matrix version of
$1$-WL~\cite{grokermla+14}. Moreover, Atserias and
Maneva~\cite{atsman13} proved a generalisation that relates $k$-WL to
the level-$k$ Sherali-Adams relaxation of the system of equations and
thus yields an algebraic characterisation of $k$-WL
indistinguishability (also see \cite{groott15,mal14} and
\cite{atsoch18,odowriwu+14,bergro15,gragropagpak19} for related
algebraic aspects of WL).

Note that to decide whether two graphs $G,H$ with adjacency matrices
$G,H$ are fractionally isomorphic, we can minimise the convex function
$\|AX-XB\|_F$, where $X$ ranges over the convex set of doubly
stochastic matrices. To minimise this function, we can use
standard gradient descent techniques for convex minimisation.  It was
shown in \cite{kermlagar+14} that, surprisingly, the refinement rounds
of $1$-WL closely correspond to the iterations of the Frank-Wolfe
convex minimisation algorithm. %

\subsection{Weisfeiler-Leman Graph Kernels}
\label{sec:wl-kernels}

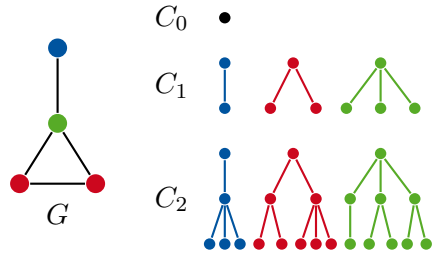
\begin{figure}
  \centering
  \begin{tikzpicture}
  [ 
  thick,
  vertex/.style={fill,circle,inner sep=0pt,minimum
    size=2.5mm},
  tn/.style={fill,circle,inner sep=0pt,minimum
    size=1.5mm},
  ] 
  
  \begin{scope}[xshift=1cm,yshift=-0.2cm]
    \node[vertex,fill=rot] (a) at (-0.5,0) {};
    \node[vertex,fill=rot] (b) at (0.5,0) {};
    \node[vertex,fill=gruen] (c) at (0,0.8) {}; 
    \node[vertex,fill=blau] (d) at (0,1.8) {}; 
    \draw[thick] (a) edge (b) edge (c) (b) edge (c) (c) edge (d);

    \node at (0,-0.4) {$G$};
  \end{scope}

  \begin{scope}[xshift=3cm,yshift=2cm]
    \node[tn] at (0.2,0) {};

    \node at (-0.5,0) {$C_0$};
  \end{scope}

  \begin{scope}[xshift=3cm,yshift=1.4cm]
    \node[tn,blau] (a1) at (0.2,0) {};
    \node[tn,blau] (a2) at (0.2,-0.6) {};
    \draw[thick,blau] (a1) edge (a2);
    
    \node[tn,rot]
    (c1) at (1.1,0) {};
    \node[tn,rot]
    (c2) at (0.8,-0.6) {};
    \node[tn,rot]
    (c3) at (1.4,-0.6) {};
    \draw[thick,rot] (c1) edge (c2) edge (c3);
    
    \node[tn,gruen]
    (b1) at (2.25,0) {};
    \node[tn,gruen]
    (b2) at (1.8,-0.6) {};
    \node[tn,gruen]
    (b3) at (2.25,-0.6) {};
    \node[tn,gruen]
    (b4) at (2.7,-0.6) {};
    \draw[thick,gruen] (b1) edge (b2) edge (b3) edge (b4);

    \node at (-0.5,-0.3) {$C_1$};
  \end{scope}

  \begin{scope}[xshift=3cm,yshift=0.2cm]
       \node[tn,blau]
        (a1) at (0.2,0) {};
        \node[tn,blau]
        (a2) at (0.2,-0.6) {};
       \node[tn,blau]
        (a3) at (0,-1.2) {};
       \node[tn,blau]
        (a4) at (0.2,-1.2) {};
       \node[tn,blau]
        (a5) at (0.4,-1.2) {};
        \draw[thick,blau] (a1) edge (a2) (a2) edge (a3) edge
        (a4) edge (a5);
        
        \node[tn,rot]
        (c1) at (1.1,0) {};
        \node[tn,rot]
        (c2) at (0.8,-0.6) {};
        \node[tn,rot]
        (c3) at (1.4,-0.6) {};
       \node[tn,rot]
        (c4) at (0.65,-1.2) {};
      \node[tn,rot]
        (c5) at (0.95,-1.2) {};
      \node[tn,rot]
        (c6) at (1.2,-1.2) {};
      \node[tn,rot]
        (c7) at (1.4,-1.2) {};
      \node[tn,rot]
        (c8) at (1.6,-1.2) {};

        \draw[thick,rot] (c1) edge (c2) edge (c3) (c2) edge (c4) edge
        (c5) (c3) edge (c6) edge (c7) edge (c8);

        \node[tn,gruen]
        (b1) at (2.25,0) {};
        \node[tn,gruen]
        (b2) at (1.85,-0.6) {};
        \node[tn,gruen]
        (b3) at (2.25,-0.6) {};
        \node[tn,gruen]
        (b4) at (2.7,-0.6) {}; 
        \node[tn,gruen]
        (b5) at (1.85,-1.2) {}; 
        \node[tn,gruen]
        (b6) at (2.1,-1.2) {}; 
       \node[tn,gruen]
        (b7) at (2.4,-1.2) {}; 
       \node[tn,gruen]
        (b8) at (2.6,-1.2) {}; 
       \node[tn,gruen]
        (b9) at (2.85,-1.2) {}; 

        \draw[thick,gruen] (b1) edge (b2) edge (b3) edge (b4) (b2)
        edge (b5) (b3) edge (b6) edge (b7) (b4) edge (b8) edge (b9);

        \node at (-0.5,-0.6) {$C_2$};
    
      \end{scope}
\end{tikzpicture}

  \caption{Viewing colours of WL as trees}
  \label{fig:colours}
\end{figure}
 
The WL algorithm collects local structure information and propagates it
along the edges of a graph. We can define very effective graph kernels
based on this local information. For every $i\ge 0$, let $C_i$ be the
set of colours that $1$-WL assigns to the vertices of a graph in the
$i$-th round. Figure~\ref{fig:colours} illustrates that we can
identify the colours in $C_i$ with rooted trees of height $i$. For
every graph $G$ and every colour $c\in C_i$, by
$\wl(c,G)$ we denote the number of vertices that receive
colour $c$ in the $i$th round of $1$-WL.

\begin{example}
  For 
the graph
$G$ shown in Figure~\ref{fig:colours} we have
\[
\wl\left(\tikz[,baseline=-0.5mm,scale=0.8,tn/.style={fill,circle,inner sep=0pt,minimum
    size=1.5mm}]{    \node[tn]
    (c1) at (1.1,0.3) {};
    \node[tn]
    (c2) at (0.8,-0.3) {};
    \node[tn]
    (c3) at (1.4,-0.3) {};
    \draw[thick] (c1) edge (c2) edge (c3);
  },
  G\right)=2,\hspace{1cm}
\wl\left(\tikz[,baseline=-0.5mm,scale=0.8,tn/.style={fill,circle,inner sep=0pt,minimum
    size=1.5mm}]{    \node[tn]
    (c1) at (1.05,0.3) {};
    \node[tn] (c2) at (0.6,-0.3) {};
    \node[tn] (c3) at (0.9,-0.3) {};
    \node[tn] (c4) at (1.2,-0.3) {};
    \node[tn] (c5) at (1.5,-0.3) {};
    \draw[thick] (c1) edge (c2) edge (c3) edge (c4) edge (c5);
  },
  G\right)=0.
\]
\end{example}

For every $t\in\Nat$, the \emph{$t$-round WL-kernel} is
the mapping $K^{(t)}_{\textup{WL}}$ defined by
\[
K^{(t)}_{\textup{WL}}(G,H):=\sum_{i=0}^t\sum_{c\in C_i}\wl(c,G)\cdot\wl(c,H)
\]
for all graphs $G,H$.
It is easy to see that this mapping is symmetric and
positive-semidefinite and thus indeed a kernel mapping; the
corresponding vector embedding maps each graph $G$ to the vector
\[
\Big(\wl(c,G)\Bigmid c\in\bigcup_{i=0}^tC_i\Big).
\]
Note that formally, we are mapping $G$ to an infinite dimensional
vector space, because all the sets $C_i$ for $i\ge 1$ are
infinite. However, for a graph $G$ of order $n$ the vector
as at most $kn+1$ nonzero entries. We can also define a version
$K_{\text{WL}}$ of the
WL-kernel that does not depend on a fixed-number of rounds by letting 
\[
K_{\textup{WL}}(G,H):=\sum_{i\ge 0}\frac{1}{2^i}\sum_{c\in C_i}\wl(c,G)\cdot\wl(c,H).
\]
The WL-kernel was introduced by Shervashidze et
al.~\cite{sheschlee+11} under the name \emph{Weisfeiler-Leman subtree
  kernel}. They also introduce variants such as a
\emph{Weisfeiler-Leman shortest path kernel}. A great advantage the WL
(subtree) kernel has over most of the graph kernels discussed in
Section~\ref{sec:kernel} is its efficiency, while performing at least
as good as other kernels on downstream tasks. Shervashidze et
al.~\cite{sheschlee+11} report that in practice, $t=5$ is a good
number of rounds for the $t$-round WL-kernel.

There are also graph kernels based on higher dimensional WL
algorithm~\cite{morkermut17}.

\subsection{Weisfeiler-Leman and GNNs}
\label{sec:wl-gnn}
Recall that a GNN computes a sequence $(\vec x_v^{(t)})_{v\in V}$, for
$t\ge 0$, of vector embeddings of a graph $G=(V,E)$. In the most
general form, it is recursively defined by 
\[
\vec x_v^{(t+1)}=f_{\textup{UP}}\Big(\vec
x_v^{(t)},f_{\textup{AGG}}\big(\vec x_w\bigmid w\in N(v)\big)\Big),
\]
where the aggregation function $f_{\textup{AGG}}$ is symmetric in its
arguments. It has been observed in several
places~\cite{hamyinles17,morritfey+19,xuhulesjeg19} that this is very
similar to the update process of 1-WL. Indeed, it is easy to see that
if the initial embedding $\vec x_v^{(0)}$ is constant then for any two
vertices $v,w$, if $1$-WL assigns the same colour to $v$ and $w$ then
$\vec x_v^{(t)}=\vec x_w^{(t)}$. This implies that two graphs that
cannot be distinguished by $1$-WL will give the same result for any
GNN applied to them; that is, GNNs are at most as expressive as
$1$-WL. It is shown in \cite{morritfey+19} that a converse of this
holds as well, even if the aggregation and update functions of the GNN
are of a very simple form (like \eqref{eq:1} and \eqref{eq:2} in Section~\ref{sec:gnn}). Based
on the connection between WL and logic, a more refined analysis of the
expressiveness of GNNs was carried out in
\cite{barkosmon+20}. However, the limitations
of the expressiveness only hold if the initial embedding
$\vec x_v^{(0)}$ is constant (or at least constant on all $1$-WL
colour classes). We can increase the expressiveness of GNNs by
assigning random initial vectors $\vec x_v^{(0)}$ to the vertices. The
price we pay for this increased expressiveness is that the output of a
run of the GNN model is no longer isomorphism invariant. However, the
whole randomised process is still isomorphism invariant. More
formally, the random variable that associates an output
$\big(\vec x_v^{(0)})_{v\in V}$ with each graph $G$ is isomorphism
invariant.

A fully invariant way to increase the expressiveness of GNNs is to
build ``higher-dimensional'' GNNs, inspired by the higher-dimensional
WL algorithm. Instead of nodes of the graphs, they operate on
constant sized tuples or sets of vertices. A flexible architecture for
such higher-dimensional GNNs is proposed in \cite{morritfey+19}.

\section{Counting Homomorphisms}
\label{sec:hom}
Most of the graph kernels and also some of the node embedding
techniques are based on counting occurrences of substructures
like walks, cycles, or trees. There are different ways of embedding
substructures into a graph. For example, walks and paths are the
same structures, but we allow repeated vertices in a walk. 
Formally, ``walks'' are homomorphic images of path graphs, whereas
``paths'' are embedded path graphs. 
It turns
that homomorphisms and homomorphic images give us a very robust and
flexible ``basis'' for counting all kinds of substructures
\cite{curdelmar17}.

A homomorphism from a graph $F$ to a graph $G$ is a mapping $h$ from
the nodes of $F$ to the nodes of $G$ such that for all edges $uu'$ of
$F$ the image $h(u)h(u')$ is an edge of $G$. On labelled graphs,
homomorphisms have to preserve vertex and edge labels, and on
directed graphs they have to preserve the edge direction. Of course we can generalise
homomorphisms to arbitrary relational structures, and we remind the
reader of the close connection between homomorphisms and conjunctive
queries. We denote the number of homomorphisms from $F$ to $G$ by
$\hom(F,G)$. 

\begin{example}
  For the graph
$G$ shown in Figure~\ref{fig:colours} we have
\[
\hom\left(\tikz[,baseline=-0.5mm,scale=0.8,tn/.style={fill,circle,inner sep=0pt,minimum
    size=1.5mm}]{    \node[tn]
    (c1) at (1.1,0.3) {};
    \node[tn]
    (c2) at (0.8,-0.3) {};
    \node[tn]
    (c3) at (1.4,-0.3) {};
    \draw[thick] (c1) edge (c2) edge (c3);
  },
  G\right)=18,\hspace{1cm}
\hom\left(\tikz[,baseline=-0.5mm,scale=0.8,tn/.style={fill,circle,inner sep=0pt,minimum
    size=1.5mm}]{    \node[tn]
    (c1) at (1.05,0.3) {};
    \node[tn] (c2) at (0.6,-0.3) {};
    \node[tn] (c3) at (0.9,-0.3) {};
    \node[tn] (c4) at (1.2,-0.3) {};
    \node[tn] (c5) at (1.5,-0.3) {};
    \draw[thick] (c1) edge (c2) edge (c3) edge (c4) edge (c5);
  },
  G\right)=114.
\]
To calculate these numbers, we observe that for the star 
$S_k$ (tree of height $1$ with $k$ leaves) we have
 $\hom(S_k,G)=\sum_{v\in V(G)}\deg_G(v)^k$.
\end{example}

For every class $\CF$ of graphs, the homomorphism counts $\hom(F,G)$ give a
graph embedding $\Hom_{\CF}$ defined by
\[
  \Hom_{\CF}(G):=\big(\hom(F,G)\bigmid F\in\CF\big)
\]
for all graphs $G$.  If $\CF$ is infinite, the latent space
$\Real^{\CF}$ of the embedding $\Hom_{\CF}$ is an infinite dimensional
vector space.  By suitably scaling the infinite series involved, we can define an inner product on a subspace
$\mathbb H_{\CF}$ of $\Real^{\CF}$ that includes the range of
$\Hom_{\CF}$. This
also gives us a graph kernel. One way of making this precise is as
follows. For every $k$, we let $\CF_k$ be the set of all $F\in\CF$ of
order $|F|:=|V(F)|=k$. Then we let
\begin{equation}
  \label{eq:30}
   K_{\CF}(G,H):=\sum_{k=1}^\infty\frac{1}{|\CF_k|}\sum_{F\in\CF_k}\frac{1}{k^k}\hom(F,G)\cdot\hom(F,H).
\end{equation}
There are various other ways of doing this, for example, rather than
looking at the sum over all $F\in\CF^k$ we may look at the maximum. In
practice, one will simply cut off of the infinite series and only consider
a finite subset of $\CF$. A problem with using homomorphism vectors
as graph embeddings is that the homomorphism numbers quickly get
tremendously large. In practice, we take logarithms of theses numbers,
possibly scaled by the size of the graphs from $\CF$. So, a
practically reasonable
graph embedding based on homomorphism vectors would take a finite
class $\CF$ of graphs and map each $G$ to the vector
\[
  \Big(\frac{1}{|F|}\log\big(\hom(F,G)\big)\Bigmid F\in\CF\Big).
\]
Initial experiments show that this graph embedding performs very
well on downstream classification tasks even if we take $\CF$ to be a
small class (of size $20$) of graphs consisting of binary trees and
cycles. This is a good indication that homomorphism vectors extract
relevant features from a graph. Note that the size of the class $\CF$
is the dimension of the feature space.

Apart from these practical considerations, homomorphisms vectors have
a beautiful theory that links them to various natural notions of
similarity between structures, including indistinguishability by the
Weisfeiler-Leman algorithm. 

\subsection{Homomorphism Indistinguishability}
\label{sec:hi}

Two graphs $G$ and $H$ are \emph{homomorphism-indistinguishable} over a
class $\CF$ of a graphs if $\Hom_{\CF}(G)=\Hom_{\CF}(H)$.  Lov\'asz
proved that homomorphism indistinguishability over the class $\CG$ of
all graphs corresponds to isomorphism.

\begin{theorem}[\cite{lov67}]\label{theo:lov}
  For all graphs $G$ and $H$,
  \[
    \Hom_{\CG}(G)=\Hom_{\CG}(H)\iff G\text{ and }H\text{ are
      isomorphic.}
  \]
\end{theorem}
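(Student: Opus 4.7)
The forward direction is immediate: any isomorphism $\pi\colon G\to H$ induces a bijection $h\mapsto \pi\circ h$ on homomorphisms from $F$, so $\hom(F,G)=\hom(F,H)$ for every $F$. The content is in the converse, and the plan is to reduce equality of homomorphism vectors to equality of \emph{induced subgraph counts}, and finally to isomorphism, by two successive Möbius inversions.

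First I would pass from homomorphisms to injective homomorphisms. Let $\inj(F,G)$ denote the number of injective homomorphisms $F\to G$. Every homomorphism $h\colon F\to G$ factors uniquely through the quotient $F/P$, where $P=\ker(h)$ is the partition of $V(F)$ into fibres. Since adjacent vertices in $F$ cannot be identified (simple graphs have no loops), this gives the identity
\[
\hom(F,G)\;=\;\sum_{P}\inj(F/P,G),
\]
where $P$ ranges over partitions of $V(F)$ no block of which contains an edge of $F$. This relation is lower-triangular with $1$'s on the diagonal (the partition into singletons yields $\inj(F,G)$), so on the partition lattice it admits a Möbius inversion expressing $\inj(F,\cdot)$ as an integer linear combination of $\hom(F',\cdot)$ for quotients $F'$ of $F$. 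Consequently, $\Hom_{\CG}(G)=\Hom_{\CG}(H)$ implies $\inj(F,G)=\inj(F,H)$ for every graph $F$.

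Second, I would pass from injective homomorphisms to induced-subgraph counts. Let $\ind(F,G)$ denote the number of injective maps $h\colon V(F)\to V(G)$ whose image induces a subgraph of $G$ isomorphic to $F$ via $h$, i.e.\ with $uv\in E(F)\Leftrightarrow h(u)h(v)\in E(G)$. Grouping injective homomorphisms $h$ by the graph $F'$ obtained from $F$ on $V(F)$ by adding all edges $uv$ such that $h(u)h(v)\in E(G)$, one obtains
\[
\inj(F,G)\;=\;\sum_{\substack{F'\text{ on }V(F)\\ E(F)\subseteq E(F')}}\ind(F',G).
\]
This is again lower-triangular (now on the Boolean lattice of edge sets containing $E(F)$), so inclusion–exclusion yields $\ind(F,G)$ as a signed sum of $\inj(F',G)$. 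Hence equality of all $\inj$-values transfers to equality of all $\ind$-values.

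Finally, taking $F=G$, the identity map on $V(G)$ is counted by $\ind(G,G)$, so $\ind(G,G)\ge 1$, and by hypothesis $\ind(G,H)\ge 1$, meaning $H$ contains an induced copy of $G$. Symmetrically $G$ contains an induced copy of $H$. Since $\hom(K_1,G)=|V(G)|=\hom(K_1,H)=|V(H)|$, the induced embedding of $G$ into $H$ must be surjective, hence an isomorphism. The main thing to get right is bookkeeping in the two inversions—particularly making sure the partition sum is restricted to partitions that do not collapse an edge of $F$, and that the edge-addition sum ranges only over supergraphs on the fixed vertex set $V(F)$; once these are set up correctly, both triangularity statements and the final conclusion follow with no further work.
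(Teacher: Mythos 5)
Your proof is correct and reaches the conclusion by a genuinely different route from the paper's. The paper also begins by factoring homomorphisms into a surjective part followed by an injective part (there phrased as $\hom(F,F')=\sum_{F''}\tfrac{1}{\aut(F'')}\epi(F,F'')\,\emb(F'',F')$), but it then assembles this into a single matrix identity $\mathit{HOM}=P\,D\,M$ over all graphs of order $\le\max\{|G|,|H|\}$, observes that $P$ and $M$ are triangular with nonzero diagonal and $D$ is an invertible diagonal matrix, and concludes from the invertibility of $\mathit{HOM}$ that two equal columns must have the same index. You instead perform the inversion twice — once on the partition lattice to recover $\inj(\cdot,G)$ from $\hom(\cdot,G)$, and once on the Boolean lattice of edge supersets to recover induced-subgraph counts $\ind(\cdot,G)$ — and then close with the purely combinatorial observation that $\ind(G,H)\ge 1$ and $|V(G)|=|V(H)|$ force an induced embedding of $G$ into $H$ to be an isomorphism. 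The paper's matrix argument is slightly more economical (it stops after one inversion and never needs induced counts), while your version makes the endgame more concrete and makes explicit that homomorphism counts determine induced-subgraph counts, a fact that is useful in its own right. One tiny remark: once you know $H$ contains an induced copy of $G$ and $|V(G)|=|V(H)|$, that copy is already spanning, so the symmetric claim that $G$ contains an induced copy of $H$ is not actually needed.
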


\begin{proof}
 The backward
  direction is trivial. For the forward direction, suppose that
  $\Hom_{\CG}(G)=\Hom_{\CG}(H)$, that is, for all graphs $F$ it holds
  that $\hom(F,G)=\hom(F,H)$.

We can decompose
every homomorphism $h:F\to F'$ as $h=f\circ g$ such that for some graph
$F''$:
\begin{itemize}
\item $g:F\to F''$ is an \emph{epimorphism}, that is, a homomorphism
  such that for every $v\in
V(F'')$ there is a $u\in V(F)$ with $g(u)=v$ and for every
$vv'\in E(F'')$ there is a $uu'\in E(F)$ with
$g(u)=v$ and $g(u')=v'$;
\item $f:F''\to F'$ is an \emph{embedding} (or \emph{monomorphism}),
  that is, a homomorphism such that $f(u)\neq f(u')$ for all $u\neq u'$.
\end{itemize}
Note that the graph $F''$ is isomorphic to the image $h(F)$ and thus
unique up to isomorphism. Moreover, there are precisely 
\[
\aut(F'')=\text{number of automorphisms of }F''
\] 
isomorphisms from $F''$ to
$h(F)$. This means that there are $\aut(F'')$ pairs $(f,g)$ such
that $h=f\circ g$ and $g:F\to F''$ is an epimorphism and $f:F''\to F'$
is an embedding. Thus we can write
\begin{equation}
  \label{eq:10}
  \hom(F,F')=\sum_{F''}\frac{1}{\aut(F'')}\cdot\epi(F,F'')\cdot\emb(F'',F'),
\end{equation}
where $\epi(F,F'')$ is the number of epimorphisms from $F$ onto $F''$,
$\emb(F'',F')$ is the number of embeddings of $F''$ into $F'$, and the
sum ranges over all isomorphism types of graphs $F''$.
Furthermore, as $\epi(F,F'')=0$ if $|F''|>|F|$, we can
restrict the sum to $F''$ of order $|F''|\le|F|$.

Let $F_1,\ldots,F_m$ be an enumeration of all graphs of order at
most $n:=\max\{|G|,|H|\}$ such that each graph of order at most $n$  is isomorphic to exactly
one graph in this list and that $i\le j$ implies $|F_i|< |F_j|$ or
$|F_i|= |F_j|$ and $\|F_i\|:=|E(F_i)|\le\|F_j\|$. Let $\mathit{HOM}$ be the
matrix with entries $\mathit{HOM}_{ij}:=\hom(F_i,F_j)$, $P$ the matrix with
entries $P_{ij}:=\epi(F_i,F_j)$, $M$ the matrix with
entries $M_{ij}:=\emb(F_i,F_j)$, and $D$ the diagonal matrix with
entries $D_{ii}:=\frac{1}{\aut(F_i)}$. Then \eqref{eq:10} yields the
following matrix equation;
\begin{equation}
  \label{eq:6}
  \mathit{HOM}=P\cdot D\cdot M.
\end{equation}
The crucial observation is that $P$ is an lower triangular matrix,
because $\epi(F_i,F_j)>0$ implies $|F_i|\ge |F_j|$ and
$\|F_i\|\ge\|F_j\|$. Moreover, $P$ has positive diagonal entries,
because $\epi(F_i,F_i)\ge 1$. Similarly, $M$ is an upper triangular
matrix with positive diagonal entries. Thus $P$ and $M$ are
invertible. The diagonal
matrix $D$ is invertible as well, because $D_{ii}=\aut(F_i)\ge
1$. Thus the matrix $\mathit{HOM}$ is invertible.

As $|G|,|H|\le n$ there are graphs $F_i,F_j$ such that $G$ is
isomorphic to $F_i$ and $H$ is isomorphic to $F_j$. Since
$\hom(F_k,G)=\hom(F_k,H)$ for all $k$ by the assumption of the lemma,
the $i$th and $j$th column of $\mathit{HOM}$ are identical. As $\mathit{HOM}$ is
invertible, this implies that $i=j$ and thus that $G$ and $H$ are
isomorphic.
\end{proof}

The theorem can be seen as a the starting point for the theory of graph
limits \cite{borchalov+06,lov12,lovsze06}. The graph embedding
$\Hom_{\CG}$ maps
graphs into an infinite dimensional real vector space, which can be
turned into a Hilbert space by defining a suitable inner product. This
transformation enables us to analyse graphs with methods of linear
algebra and functional analysis and, for example, to consider convergent
sequences of graphs and their limits, called \emph{graphons} (see
\cite{lov12}).

However, not only the ``full'' homomorphism vector $\Hom_{\CG}(G)$ of
a graph $G$, but also its projections $\Hom_{\CF}(G)$ to natural
classes $\CF$ capture very interesting information about $G$. A first
result worth mentioning is the following. This result is well-known,
though usually phrased differently. Two graphs are \emph{co-spectral}
if their adjacency matrices have the same eigenvalues with the same
multiplicities. Figure~\ref{fig:cospectral} shows two graph that are
co-spectral, but not isomorphic.

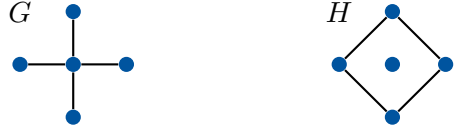
\begin{figure}
  \centering
  \begin{tikzpicture}[
  vertex/.style = {fill=blau,circle,inner sep=0pt, minimum height
    = 2mm},
  scale=0.7
  ]
  \begin{scope}
    \node[vertex] (a)   at (0   ,0  ) {};
    \node[vertex] (b1)  at (1   ,0) {};
    \node[vertex] (b2)  at (0   ,1) {};
    \node[vertex] (b3)  at (-1   ,0) {};
    \node[vertex] (b4)  at (0   ,-1) {};
    
    \draw[thick] (a) edge (b1) edge (b2) edge (b3) edge (b4);

    \path (-1,1) node {$G$};
  \end{scope}
  \begin{scope}[xshift=6cm]
    \node[vertex] (a)   at (0   ,0  ) {};
    \node[vertex] (b1)  at (1   ,0) {};
    \node[vertex] (b2)  at (0   ,1) {};
    \node[vertex] (b3)  at (-1   ,0) {};
    \node[vertex] (b4)  at (0   ,-1) {};
    
    \draw[thick] (b1) edge (b2) (b2) edge (b3) (b3) edge (b4) (b4)
    edge (b1);

    \path (-1,1) node {$H$};

  \end{scope}
\end{tikzpicture}

  \caption{Co-spectral graphs}
  \label{fig:cospectral}
\end{figure}

\begin{theorem}[Folklore]\label{theo:cycle}
  For all graphs $G$ and $H$, 
  \[
    \Hom_{\CC}(G)=\Hom_{\CC}(H)\iff G\text{ and }H\text{ are
     co-spectral.}
  \]
  Here $\CC$ denotes the class of all cycles.
\end{theorem}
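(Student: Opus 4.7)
The plan hinges on the classical identity
\[
\hom(C_k, G) \;=\; \mathrm{tr}(A_G^k) \;=\; \sum_{i=1}^{n} \lambda_i^k,
\]
where $\lambda_1,\ldots,\lambda_n$ are the eigenvalues of the adjacency matrix $A_G$. This holds because a homomorphism from the $k$-cycle $C_k$ to $G$ is precisely a closed walk of length $k$ in $G$, and the number of closed walks of length $k$ in $G$ is recorded by $\mathrm{tr}(A_G^k)$; diagonalising the symmetric matrix $A_G$ then yields the power-sum expression. Given this identity, the theorem reduces to the statement that two graphs are cospectral if and only if the eigenvalue multisets of $A_G$ and $A_H$ yield the same power sums for every $k$ with $C_k\in\CC$.

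The backward direction is then immediate: cospectral graphs have the same multiset of eigenvalues, so every power sum agrees, so $\hom(C_k,G)=\hom(C_k,H)$ for every $k$.

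For the forward direction, the plan is to recover the characteristic polynomials of $A_G$ and $A_H$ from the power sums. Writing $p_k:=\mathrm{tr}(A^k)$, Newton's identities give a triangular linear system expressing the elementary symmetric polynomials $e_1,\ldots,e_n$ of the eigenvalues in terms of $p_1,\ldots,p_n$; these $e_j$ are, up to sign, the coefficients of the characteristic polynomial $\chi_A(t)=\prod_i(t-\lambda_i)$. Hence equal power sums up to index $n$ force equal characteristic polynomials, and the spectra coincide with multiplicities.

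The main obstacle, which I would dispose of last, is that the natural reading of \emph{cycle} starts at $C_3$, so the hypothesis a priori only delivers $p_k(G)=p_k(H)$ for $k\ge 3$, while Newton's identities want all of $p_1,\ldots,p_n$. I would handle this as follows: $p_1=\mathrm{tr}(A_G)=0$ automatically for any simple graph; and the generating function $\sum_{k\ge 0}p_k t^k=\sum_i(1-\lambda_i t)^{-1}$ is a rational function whose nonzero poles (with multiplicity) are exactly the reciprocals of the nonzero eigenvalues, so agreement of $p_k$ for all $k\ge 3$ already pins down the multiset of nonzero eigenvalues and in particular forces $p_2=2\lvert E(G)\rvert$ to match. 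Under the standard reading that cospectrality is asserted between graphs of equal order, the remaining freedom — the multiplicity of the eigenvalue $0$ — is then also fixed, completing the argument.
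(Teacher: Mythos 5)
Your proposal hinges on exactly the same identity the paper uses, namely $\hom(C_k,G)=\operatorname{trace}(A^k)=\sum_i\lambda_i^k$, and the backward direction is disposed of identically. The paper's sketch, however, compresses the forward direction into the phrase ``a simple linear-algebraic argument''; your Newton's-identities account is a correct and natural way to supply what that phrase suppresses, since the identities give a triangular system from which $e_1,\dots,e_n$, and hence the characteristic polynomial, are recovered from $p_1,\dots,p_n$.

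Where you genuinely go beyond the paper is in flagging two subtleties that the sketch silently glosses over. First, if $\CC$ starts at $C_3$ then the hypothesis only supplies $p_k$ for $k\ge3$, and you correctly observe that this is still enough: $p_1=0$ is automatic for loopless graphs, and the multiset of nonzero eigenvalues (with multiplicities) is determined by the tail $\sum_{k\ge3}p_kt^k=\sum_{\lambda_i\neq0}\frac{(\lambda_i t)^3}{1-\lambda_i t}$, from which $p_2$ follows. (Your phrasing ``poles with multiplicity'' is slightly loose — the poles are simple and it is the residues that carry the multiplicities — but the conclusion that the nonzero spectrum is pinned down is right.) Second, you correctly observe that no $p_k$ with $k\ge1$ sees the multiplicity of the eigenvalue $0$, since no cycle has a homomorphism count that records $|V(G)|$; for example $K_3$ and $K_3\cup K_1$ agree on $\Hom_\CC$ but are not cospectral. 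So the theorem's implicit convention that cospectrality is asserted between graphs of equal order is doing real work, and it is to your credit that you made it explicit rather than sweeping it under ``simple linear algebra''. In short: same route as the paper, executed more carefully.
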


\begin{proof}[Proof sketch]
  Observe that for the cycle $C_k$ of length $k$ we have
  $\hom(C_k,G):=\operatorname{trace}(A^k)$, where $A$ is the adjacency matrix of
  $G$. It is well-known that the trace of a symmetric real matrix is
  the sum of its eigenvalues and that the eigenvalues of $A^k$ are the
  $k$th powers of the eigenvalues of $A$. This
  immediately implies the backward direction. By a simple
  linear-algebraic argument, it also implies the forward direction.
\end{proof}

Dvor\'ak~\cite{dvo10} proved that homomorphism counts of trees, and more generally, graphs of bounded tree
width link homomorphism vectors to the Weisfeiler-Leman algorithm.

\begin{theorem}[\cite{dvo10}]\label{theo:treewidth}
  For all graphs $G$ and $H$ and all $k\ge 1$,
  \[
    \Hom_{\CT_k}(G)=\Hom_{\CT_k}(H)\iff \text{$k$-WL does not
      distinguish $G$ and $H$.}
  \]
  Here $\CT_k$ denotes the class of all graphs of tree width at most $k$.
\end{theorem}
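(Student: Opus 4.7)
The plan is to prove the two directions separately, using Theorem~\ref{theo:cfi} as a bridge between $k$-WL and the logic $\LC^{k+1}$.

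For the direction ``$k$-WL does not distinguish $G,H \implies \Hom_{\CT_k}(G)=\Hom_{\CT_k}(H)$'', I would fix $F\in\CT_k$ together with a rooted tree decomposition $(T,\beta)$ of width at most $k$, and prove by induction along $T$ that for every node $t$ the \emph{partial homomorphism count}
\[
  \hom_t(\vec v, G):=\bigl|\{h:V(F_t)\to V(G)\text{ hom.}: h(\vec u_t)=\vec v\}\bigr|,
\]
with $F_t$ the subgraph of $F$ induced by the bags below $t$ and $\vec u_t$ a fixed enumeration of $\beta(t)$, depends only on the stable $k$-WL colour of $\vec v$ in $G$. The leaf case amounts to saying that the atomic type of $\vec v$ in $G$ is encoded in its colour. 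The inductive step uses two operations: joining two children multiplies partial counts pointwise (visibly preserving being ``colour-determined''), while introducing or forgetting a bag element sums out one coordinate, and this is exactly the defining closure property of the stable $k$-WL colouring on tuples. Summing $\hom_{\textup{root}}(\vec v,G)$ over all $\vec v$ produces $\hom(F,G)$, which must then equal $\hom(F,H)$ whenever the $k$-WL colour multiplicities of $G$ and $H$ agree.

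For the converse direction, I would first invoke Theorem~\ref{theo:cfi} and reduce to the following: every $\LC^{k+1}$-sentence $\phi$ distinguishing $G$ and $H$ yields some $F\in\CT_k$ with $\hom(F,G)\neq\hom(F,H)$. The plan is to show, by induction on $\LC^{k+1}$-formulas $\phi(\vec x)$, that the number of tuples satisfying $\phi(\vec x)$ in any graph $J$ is an integer linear combination of homomorphism counts $\hom(F,J)$ for finitely many $F\in\CT_k$. Atomic formulas and the counting quantifier $\exists^{\ge p}y$ are handled directly, and conjunction corresponds to gluing two pattern graphs along their shared free variables, which stays inside $\CT_k$ because the free-variable set lives in a single bag of width at most $k$. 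Negation is realised by an inclusion--exclusion in the spirit of equation~\eqref{eq:10}: the transformation between $\hom$-counts and $\emb$- or $\epi$-counts is a triangular invertible matrix when graphs are ordered by size and edge count, and it restricts cleanly to $\CT_k$ because every quotient of a tree-width-$k$ graph again has tree width at most $k$.

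The main obstacle is the converse, specifically certifying that all intermediate gadgets stay inside $\CT_k$. Intuitively, an $\LC^{k+1}$-formula manipulates at most $k+1$ simultaneous variables, and these should correspond exactly to a single bag of a width-$k$ tree decomposition of the resulting pattern graph; formalising this syntactic-to-structural correspondence so that conjunction, counting quantification and negation all produce gadgets of tree width at most $k$ is where the technical work lies. Once this combinatorial alignment is set up, the triangular matrix argument from the proof of Theorem~\ref{theo:lov}, restricted to rows and columns indexed by $\CT_k$, closes the proof.
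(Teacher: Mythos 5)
Your forward direction is correct in spirit and takes a genuinely different route from the paper. You run dynamic programming along a tree decomposition of $F\in\CT_k$, arguing that the partial homomorphism count at each node is determined by the stable colour of the bag tuple; the paper instead works only with the tree case $k=1$, identifies stable $1$-WL colours with (finite or infinite) rooted trees and counts \emph{rooted tree homomorphisms} from $(S,r)$ into the colour tree $T$ of a vertex. Both work, but yours needs more bookkeeping than you let on: $k$-WL colours $k$-tuples while bags of a width-$k$ decomposition can have $k+1$ elements, so ``the $k$-WL colour of $\vec v$'' has to be replaced by the $\LC^{k+1}$-type of a $(k{+}1)$-tuple (equivalently, the $k$-WL colours of all its $k$-subtuples). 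With that fix, the introduce/forget/join closure properties are exactly what Theorem~\ref{theo:cfi} gives.

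Your converse direction has a genuine gap. The key step is to handle negation by restricting the M\"obius inversion of equation~\eqref{eq:10} to $\CT_k$, and you justify this with the claim that ``every quotient of a tree-width-$k$ graph again has tree width at most $k$.'' This is false. For instance, a long path $P_n$ (tree width $1$) admits a surjective homomorphism onto $K_3$ (alternate colours $a,b,a,c,a,b,c,\dots$ appropriately), and more generally, for any $m$ there is a tree that maps surjectively onto $K_m$, which has tree width $m-1$. So the triangular matrix $P=(\epi(F_i,F_j))$ does not restrict to a square invertible matrix indexed by $\CT_k$: inverting it introduces homomorphism counts from quotients that escape $\CT_k$, and your inclusion--exclusion for $\neg\phi$ collapses. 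The paper sidesteps exactly this obstacle. Its converse argument never inverts $\hom$ against $\emb$ over $\CT_k$; instead it observes that distinct colour trees $T,T'$ are separated by the number of \emph{rooted tree homomorphisms} of some rooted tree into them. Rooted trees are closed under the relevant images, so the triangular argument from Theorem~\ref{theo:lov} does apply to that family. The remaining step (getting from ``separating colours'' to ``equal multiplicities'') is then handled not by matrix inversion but by the disjoint-union trick producing the trees $T^{(\vec d)}$, which yields a Vandermonde-style system guaranteeing $p_j=q_j$. If you want to pursue the logic-based converse, you would need an argument of this kind rather than a naive restriction of the Lov\'asz inversion to $\CT_k$.
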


To prove the theorem, it suffices to consider connected graphs in
$\CT_k$, because for a graph $F$ with connected components
$F_1,\ldots,F_m$ we have $\hom(F,G)=\prod_{i=1}^m\hom(F_i,G)$. The
connected graphs of tree width $1$ are the trees.
We sketch a proof of the theorem for trees in Section~\ref{sec:hom-ne}.

In combination with Theorem~\ref{theo:fractional},
Theorem~\ref{theo:treewidth} implies
the following.

\begin{corollary}
  For all graphs $G$ and $H$,
    \[
    \Hom_{\CT}(G)=\Hom_{\CT}(H)\iff \parbox[t]{4cm}{$G$ and $H$ are
      fractionally isomorphic, that is, equations
\eqref{eq:4} and \eqref{eq:5} have a nonnegative rational solution.}
\]
 Here $\CT$ denotes the class of all trees.
\end{corollary}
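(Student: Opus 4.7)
The plan is to chain together the two previously stated results, Theorem~\ref{theo:fractional} (Tinhofer) and Theorem~\ref{theo:treewidth} (Dvořák), and then bridge the small gap between ``forest'' and ``tree''. First, I would apply Theorem~\ref{theo:treewidth} in the case $k=1$. Since the class $\CT_1$ of graphs of tree width at most $1$ consists precisely of forests, this says that $\Hom_{\CT_1}(G)=\Hom_{\CT_1}(H)$ if and only if $1$-WL does not distinguish $G$ and $H$. Combining this with Theorem~\ref{theo:fractional} immediately gives that $G$ and $H$ are fractionally isomorphic if and only if $\Hom_{\CT_1}(G)=\Hom_{\CT_1}(H)$.

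The remaining step is to show that homomorphism indistinguishability over the class $\CT_1$ of forests is the same as homomorphism indistinguishability over the class $\CT$ of trees. Since every tree is a (connected) forest, we have $\CT\subseteq\CT_1$, so the implication from forests to trees is trivial. For the converse, I would invoke the multiplicativity of $\hom$ over connected components that is already observed in the text immediately following Theorem~\ref{theo:treewidth}: if a forest $F$ has connected components $T_1,\ldots,T_m$ (each of which is a tree), then
\[
\hom(F,G)=\prod_{i=1}^{m}\hom(T_i,G).
\]
Hence if $\hom(T,G)=\hom(T,H)$ for every tree $T$, then also $\hom(F,G)=\hom(F,H)$ for every forest $F$, giving $\Hom_{\CT_1}(G)=\Hom_{\CT_1}(H)$.

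Putting the two steps together yields the claimed equivalence. There is essentially no hard step here; the only point that requires a small amount of care is the precise identification $\CT_1=\{\text{forests}\}$ (so that isolated-vertex graphs and disconnected forests are included) and the fact that this extension by disjoint unions is harmless thanks to the product formula for $\hom$.
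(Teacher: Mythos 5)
Your proof is correct and follows exactly the route the paper intends: the paper states the corollary as an immediate consequence of Theorems~\ref{theo:fractional} and~\ref{theo:treewidth}, and the remark about multiplicativity of $\hom$ over connected components (and the identification of connected graphs of tree width~$1$ with trees) that you use to bridge $\CT_1$ and $\CT$ appears verbatim in the text just after Theorem~\ref{theo:treewidth}.
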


Remarkably, for paths we obtain a similar characterisation that
involves the same equations, but drops the nonnegativity constraint.

\begin{theorem}[\cite{delgrorat18}]\label{theo:path}
    For all graphs $G$ and $H$,
    \[
    \Hom_{\CP}(G)=\Hom_{\CP}(H)\iff \parbox[t]{4cm}{equations
\eqref{eq:4} and \eqref{eq:5} have a\\ rational solution.}
\]
 Here $\CP$ denotes the class of all paths.
\end{theorem}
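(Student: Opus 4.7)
The plan is to pass to matrix language and then construct a suitable $X$ from the cyclic Krylov subspace spanned by the all-ones vector. Write $A$ and $B$ for the adjacency matrices of $G$ and $H$, let $\vec 1$ denote the all-ones vector of appropriate dimension, and observe that $\hom(P_k,G) = \vec 1^{\top} A^{k-1} \vec 1$ (the number of walks of length $k-1$ in $G$), so that $\Hom_{\CP}(G) = \Hom_{\CP}(H)$ is equivalent to the \emph{walk-equivalence}
\[
   \vec 1^{\top} A^k \vec 1 \;=\; \vec 1^{\top} B^k \vec 1 \qquad \text{for all } k \ge 0,
\]
the case $k=0$ forcing $|V(G)|=|V(H)|$ as it must. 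A solution to \eqref{eq:4} and \eqref{eq:5} is, correspondingly, a rational matrix $X$ satisfying $AX = XB$, $X\vec 1 = \vec 1$, and $\vec 1^{\top} X = \vec 1^{\top}$.

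The forward direction is immediate: iterating $AX = XB$ gives $A^k X = X B^k$, and sandwiching between $\vec 1^{\top}$ and $\vec 1$ together with the two sum conditions yields walk-equivalence. For the backward direction, I would build $X$ using the cyclic subspaces $U_A := \operatorname{span}\{A^k \vec 1 : k \ge 0\}$ and $U_B := \operatorname{span}\{B^k \vec 1 : k \ge 0\}$, defining on $U_B$ the linear map $X_0 \colon B^k \vec 1 \mapsto A^k \vec 1$. Its well-definedness is exactly where walk-equivalence is used: if $\sum_k c_k B^k \vec 1 = 0$ then
\[
   \Bigl\|\sum_k c_k A^k \vec 1\Bigr\|^2 = \sum_{j,k} c_j c_k\, \vec 1^{\top} A^{j+k}\vec 1 = \sum_{j,k} c_j c_k\, \vec 1^{\top} B^{j+k}\vec 1 = 0,
\]
which in particular also gives $\dim U_A = \dim U_B$.

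Next I would extend $X_0$ to all of $\Real^{V(H)}$ by declaring it zero on the orthogonal complement $U_B^{\perp}$. Because $B$ is symmetric, $U_B^{\perp}$ is $B$-invariant, so $AX = XB$ reduces to checking the identity on $U_B$ (where it is built into $X_0$) and on $U_B^{\perp}$ (where both sides vanish). Since $\vec 1 \in U_B$ and $X_0(\vec 1) = \vec 1$, we get $X \vec 1 = \vec 1$. For the column-sum identity, $\vec 1^{\top} X$ vanishes on $U_B^{\perp} \subseteq \vec 1^{\perp}$, and on $B^k\vec 1$ walk-equivalence gives $\vec 1^{\top} X B^k \vec 1 = \vec 1^{\top} A^k \vec 1 = \vec 1^{\top} B^k \vec 1$. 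Rationality of $X$ is automatic: $U_B$ admits a rational basis, the orthogonal projector onto $U_B$ is a rational matrix, and $X_0$ is defined by a rational linear identification of $U_B$ with $U_A$.

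The principal obstacle is therefore the well-definedness of $X_0$, which is really the only place the hypothesis does any work; the remainder is verification. In spirit the argument mirrors Tinhofer's theorem (Theorem~\ref{theo:fractional}), but is carried out over $\Rat$ rather than over the doubly stochastic cone, which is what allows the weaker, Weisfeiler--Leman-free characterisation in terms of paths rather than trees to go through.
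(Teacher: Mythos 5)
Your proof is correct and follows essentially the same Krylov-subspace/spectral strategy as the cited proof of Dell, Grohe, and Rattan: translate path homomorphism counts into the walk numbers $\vec 1^\top A^k \vec 1$, define the intertwiner on the cyclic subspace generated by $\vec 1$ (well-definedness coming from the Gram-matrix computation you give), and extend by zero on the orthogonal complement, using symmetry of $A,B$ to make that complement invariant. One small remark: the final paragraph on rationality is more work than necessary, since \eqref{eq:4} and \eqref{eq:5} form a linear system with integer coefficients, so the mere existence of a real solution already yields a rational one by Gaussian elimination over $\Rat$.
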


While the proof of Theorem~\ref{theo:treewidth} relies on techniques
similar to the proof of Theorem~\ref{theo:lov}, the proof of
Theorem~\ref{theo:cycle} is based on spectral techniques similar to
the proof of Theorem~\ref{theo:cycle}.

\begin{example}
  For the co-spectral graphs $G,H$ shown in
  Figure~\ref{fig:cospectral} we have
  \[
    \hom\left(\tikz[,baseline=-0.5mm,scale=0.8,tn/.style={fill,circle,inner sep=0pt,minimum
    size=1.5mm}]{    \node[tn]
    (c1) at (1.1,0.3) {};
    \node[tn]
    (c2) at (0.8,-0.3) {};
    \node[tn]
    (c3) at (1.4,-0.3) {};
    \draw[thick] (c1) edge (c2) edge (c3);
  },
  G\right)=20,\hspace{1cm}
    \hom\left(\tikz[,baseline=-0.5mm,scale=0.8,tn/.style={fill,circle,inner sep=0pt,minimum
    size=1.5mm}]{    \node[tn]
    (c1) at (1.1,0.3) {};
    \node[tn]
    (c2) at (0.8,-0.3) {};
    \node[tn]
    (c3) at (1.4,-0.3) {};
    \draw[thick] (c1) edge (c2) edge (c3);
  },
  H\right)=16.
\]
Thus $\Hom_{\CP}(G)\neq\Hom_{\CP}(H)$.
\end{example}

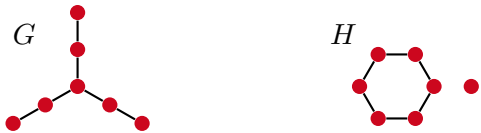
\begin{figure}
  \centering
  \begin{tikzpicture}[
  vertex/.style = {fill=rot,circle,inner sep=0pt, minimum height
    = 2mm},
  scale=0.7
  ]
  \begin{scope}
    \node[vertex] (n)   at (0   :0  ) {};
    \node[vertex] (n1)  at (330   :0.7) {};
    \node[vertex] (n1b) at (330   :1.4) {};
    \node[vertex] (n2)  at (90 :0.7) {};
    \node[vertex] (n2b) at (90 :1.4) {};
    \node[vertex] (n3)  at (210:0.7) {};
    \node[vertex] (n3b) at (210:1.4) {};
    \draw[thick] (n) -- (n1) -- (n1b);
    \draw[thick] (n) -- (n2) -- (n2b);
    \draw[thick] (n) -- (n3) -- (n3b);
    
    \path (-1,1) node {$G$};
  \end{scope}

  \begin{scope}[xshift=6cm]
    \node[vertex] (n1)  at (0   :0.7) {};
    \node[vertex] (n2)  at (60  :0.7) {};
    \node[vertex] (n3)  at (120 :0.7) {};
    \node[vertex] (n4)  at (180 :0.7) {};
    \node[vertex] (n5)  at (240 :0.7) {};
    \node[vertex] (n6)  at (300 :0.7) {};
    \node[vertex] (n)   at (0:1.4) {};
    \draw[thick] (n1) edge (n2) edge (n6) (n3) edge (n2) edge (n4)
    (n5) edge (n6) edge (n4);    

    \path (-1,1) node {$H$};

  \end{scope}
\end{tikzpicture}

  \caption{Two graphs that are homorphism-indistinguishable over the
    class of paths}
  \label{fig:copath}
\end{figure}

\begin{example}
  Figure~\ref{fig:copath} shows graphs $G,H$ with
  \[
    \Hom_{\CP}(G)=\Hom_{\CP}(H).
  \] 
  Obviously, 1-WL distinguishes the two
  graphs. Thus $\Hom_{\CT}(G)\neq\Hom_{\CT}(H)$. It can also be
  checked that the graphs are not co-spectral. Hence $\Hom_{\CC}(G)\neq\Hom_{\CC}(H)$
\end{example}

Combined with Theorem~\ref{theo:cfi}, Theorem~\ref{theo:treewidth}
implies the following correspondence between homomorphism counts of
graphs of bounded tree width and the finite variable fragments of the
counting logic $\LC$ introduced in Section~\ref{sec:log_alg}.

\begin{corollary}
  For all graphs $G$ and $H$ and all $k\ge 1$,
  \[
    \Hom_{\CT_k}(G)=\Hom_{\CT_k}(H)\iff \text{$G$ and $H$ are
      $\LC^{k+1}$-equivalent.}
  \]
\end{corollary}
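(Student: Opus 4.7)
The plan is to observe that this corollary is a direct consequence of chaining the two biconditionals already proved in the paper, namely Theorem~\ref{theo:treewidth} (Dvo\v{r}\'ak) and Theorem~\ref{theo:cfi} (Cai--F\"urer--Immerman). Both of these results give characterisations of indistinguishability by the $k$-dimensional Weisfeiler--Leman algorithm: the former in terms of homomorphism counts from graphs of tree width at most $k$, the latter in terms of equivalence in the counting logic $\LC^{k+1}$. Transitivity of ``iff'' then yields the claimed equivalence between $\Hom_{\CT_k}(G)=\Hom_{\CT_k}(H)$ and $\LC^{k+1}$-equivalence.

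Concretely, I would first apply Theorem~\ref{theo:treewidth} to replace the hypothesis $\Hom_{\CT_k}(G)=\Hom_{\CT_k}(H)$ by the equivalent condition that $k$-WL does not distinguish $G$ and $H$, and then apply Theorem~\ref{theo:cfi} to replace the latter by $\LC^{k+1}$-equivalence of $G$ and $H$. No additional argument is required, and there is no real obstacle to overcome; the only point worth checking is that the parameter conventions line up, i.e., that the dimension $k$ of WL matches tree width at most $k$ on one side and $k{+}1$ variables of $\LC$ on the other, which is exactly how the two source theorems are phrased in the excerpt.
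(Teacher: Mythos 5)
Your proposal is correct and is exactly the argument the paper intends: the corollary is stated immediately after the sentence ``Combined with Theorem~\ref{theo:cfi}, Theorem~\ref{theo:treewidth} implies the following,'' so it is precisely the chaining of the two biconditionals through $k$-WL indistinguishability that you describe. The parameter alignment you flag (tree width $\le k$, $k$-WL, $k{+}1$ variables) is indeed the only thing to double-check, and it matches.
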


This is interesting, because it shows that the homomorphism vector
$\Hom_{\CT_k}(G)$ gives us all the information necessary to answer
queries expressed in the logic $\LC^{k+1}$. Unfortunately, the result does not
tell us how to answer $\LC^{k+1}$-queries algorithmically if we have access to
the vector $\Hom_{\CT_k}(G)$. To make the question precise, suppose we
have oracle access that allows us to obtain, for every graph $F\in\CT_k$,
the entry $\hom(F,G)$ of the homomorphism vector. Is it possible to
answer a $\LC^{k+1}$-query in polynomial time (either with respect
to data complexity or combined complexity)?

Arguably, from a logical perspective it is even more natural to
restrict the \emph{quantifier rank} (maximum number of nested
quantifiers) in a formula rather than the number of variables. Let
$\LC_k$ be the fragment of $\LC$ consisting of all formulas of
quantifier rank at most $k$. We obtain the following characterisation
of $\LC_k$-equivalence in terms of homomorphism vectors over the class
of graphs of \emph{tree depth} at most $k$. Tree depth, introduced by
Ne\v set\v ril and Ossona de Mendez \cite{nesoss06}, is another
structural graph parameter that has received a lot of attention in
recent years
(e.g.~\cite{bantan16,buldaw14,cheflu18,elbjaktan12,elbgrotan16}).

\begin{theorem}[\cite{gro20b}]\label{theo:treedepth}
    For all graphs $G$ and $H$ and all $k\ge 1$,
  \[
    \Hom_{\CTD_k}(G)=\Hom_{\CTD_k}(H)\iff \text{$G$ and $H$ are
      $\LC_{k}$-.equivalent.}
  \]
  Here $\CTD_k$ denotes the class of all graphs of tree depth at most $k$.
\end{theorem}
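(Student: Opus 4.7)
The plan is to mirror the strategy of Dvo\v{r}\'ak's Theorem~\ref{theo:treewidth}, replacing the tree-width/variable-count correspondence by a tree-depth/quantifier-rank correspondence. The key combinatorial fact is the recursive characterisation of tree depth: a connected graph $F$ has $\operatorname{td}(F)\le d$ iff some vertex $r$ satisfies $\operatorname{td}(F-r)\le d-1$, and a disconnected graph has tree depth equal to the maximum over its components.

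For the direction $G\equiv_{\LC_k}H\Rightarrow\Hom_{\CTD_k}(G)=\Hom_{\CTD_k}(H)$, I would prove by induction on $d$ a strengthened statement: for every graph $F$ with $\operatorname{td}(F)\le d$ and every tuple $\vec b$ of ancestors along a branch of an optimal elimination forest, the partial homomorphism count $(G,\vec v)\mapsto \#\{h\colon F\to G\mid h(\vec b)=\vec v\}$ is determined by the $\LC_{d-|\vec b|}$-type of $(G,\vec v)$. The inductive step peels off the root $r$ of a connected component of $F-\{\text{entries of }\vec b\}$: the further components each have strictly smaller elimination depth, their partial counts factor multiplicatively, and quantifying over the image of $r$ contributes exactly one extra counting quantifier, preserving invariance. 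Specialising to $\vec b=\varnothing$ at the end identifies $\hom(F,G)$ as an $\LC_d$-invariant.

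For the converse direction, I would argue contrapositively using the $k$-round counting Ehrenfeucht--\Fraisse\ game that characterises $\LC_k$-equivalence. Assuming $G\not\equiv_{\LC_k}H$, Spoiler has a winning strategy of depth $\le k$. From Spoiler's strategy tree I would build a distinguishing graph $F$ whose vertex set is populated by elements picked along the plays, whose edges inherit the atomic discrepancies at the leaves, and whose elimination forest is exactly the strategy tree, so $F\in\CTD_k$. To upgrade the game-theoretic distinction to a numerical one between $\hom(F,G)$ and $\hom(F,H)$, I would use a matrix-inversion argument analogous to \eqref{eq:10}: the rooted-type counts Spoiler exploits decompose as an upper-triangular image of the rooted homomorphism counts from graphs in $\CTD_k$, with the triangularity coming from a size-then-edges enumeration as in the proof of Theorem~\ref{theo:lov}, and inverting this transformation transports the distinction to concrete homomorphism counts.

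The main obstacle is the $(\Rightarrow)$ direction. In the counting EF-game, Spoiler's moves pick entire sets of elements rather than single vertices, so the branching of the strategy tree does not map directly onto an elimination forest, and one must carefully combine sibling branches into a single graph without inflating the elimination depth above $k$. A second delicate point is the triangular-inversion step, which requires packaging the relevant rooted counts into a matrix whose triangularity reflects a well-chosen pre-order on $\CTD_k$ in the spirit of the size/edges enumeration in the proof of Theorem~\ref{theo:lov}. Once these two pieces are in place, the remaining steps follow the now-standard pattern established by Theorems~\ref{theo:lov} and~\ref{theo:treewidth}.
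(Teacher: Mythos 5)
The paper does not prove Theorem~\ref{theo:treedepth}; it only cites \cite{gro20b}, so there is no in-paper proof to compare against. On the merits of your proposal: the direction ``$\LC_k$-equivalence $\Rightarrow$ homomorphism indistinguishability over $\CTD_k$'' is essentially right — peeling roots off an elimination forest tracks quantifier rank, and once you carry the tuple $\vec b$ of ancestors the partial counts factor over the subtrees hanging below the branch. You do gloss over one point: quantifying over the image of $r$ yields a \emph{sum} of products of lower-level counts, not a count, so you must observe that these products take only boundedly many values (given a bound on $|G|$) before a single round of counting quantifiers can recover the sum.

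The real gap is in the converse direction, and you partly see it yourself. In both the Immerman--Lander counting game and Hella's bijection game, Spoiler's strategy tree branches over Duplicator's replies, and the vertex Spoiler eventually pins in a round depends on that reply; so a node of the strategy tree does not yield a single abstract vertex, and ``the elements picked along the plays'' live inside $G$ and $H$ rather than in a fresh graph $F$ whose tree depth you can control. The proposal offers no mechanism for the combination step you flag (``combine sibling branches into a single graph without inflating the elimination depth''), and that is precisely where the proof lives. The argument that works is a pointed induction on $k$ in the spirit of Theorem~\ref{theo:node-treewidth}: show that equality of all rooted homomorphism counts over graphs of tree depth at most $k$ forces equality of the \emph{multisets} of lower $\LC$-types of one-step extensions, and recurse. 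Recovering the multiset from the sums hidden inside rooted homomorphism counts is where a Vandermonde-style inversion is needed — the same device as the $T^{(\vec d)}$ gadget in the sketched proof of Theorem~\ref{theo:treewidth} and Lemma~4.2 of \cite{gro20a}. Your appeal to the triangularity from the proof of Theorem~\ref{theo:lov} is the wrong device here: that triangularity is a relation between graphs ordered by size and edge count, whereas what you need is a polynomial-in-the-values inversion that isolates, from aggregate counts, the number of extensions of each type.
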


Another very remarkable result, due to Man\v cinska and
Roberson~\cite{manrob19}, states that two graphs are
homomorphism-indistinguish\-able over the class of all planar graphs if
and only if they are \emph{quantum isomorphic}. Quantum isomorphism,
introduced in \cite{atsmanrob+19}, is a complicated notion
that is based on similar systems of equations as \eqref{eq:4} and
\eqref{eq:5} and their generalisation characterising
higher-dimensional Weisfeiler-Leman indistinguishability,
but with non-commutative variables ranging over the elements of
a $C^*$-algebra.

\subsection{Beyond Undirected Graphs}
\label{sec:beyond}

So far, we have only considered homomorphism indistinguishability on
undirected graphs. A few results are known for directed
graphs. In particular, Theorem~\ref{theo:lov} directly extends to directed
graphs. Actually, we have the following stronger
result, also due to Lov\'asz \cite{lov71} (also see
\cite{bok19}).

\begin{theorem}[\cite{lov71}]
  For all directed graphs $G$ and $H$,
  \[
    \Hom_{\mathcal{DA}}(G)=\Hom_{\mathcal{DA}}(G)\iff G\text{ and }H\text{ are
      isomorphic.}
  \]
  Here $\mathcal{DA}$ denote the class of all directed acyclic graphs.
\end{theorem}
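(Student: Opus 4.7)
The backward direction is immediate: any isomorphism $G \cong H$ induces bijections between homomorphism sets, so $\hom(F,G) = \hom(F,H)$ for every digraph $F$, and in particular every DAG.

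For the forward direction, I plan to adapt the matrix-factorisation argument in the proof of Theorem~\ref{theo:lov}. The epi-then-embedding factorisation of homomorphisms carries over to the directed setting, yielding the identity
\begin{equation*}
\hom(F, F') \;=\; \sum_{F''} \frac{1}{\aut(F'')}\, \epi(F, F'')\, \emb(F'', F'),
\end{equation*}
with $F''$ ranging over isomorphism classes of digraphs of order at most $|F|$. A key subtlety is that, even when $F$ is a DAG, the quotient $F''$ need not be acyclic, since identifying two endpoints of a directed path creates a cycle; so the sum genuinely ranges over all digraphs. Setting $n := \max\{|G|,|H|\}$ and enumerating all digraphs of order at most $n$ as $F_1,\ldots,F_t$, ordered so that $|F_i|$ and then $|E(F_i)|$ are non-decreasing, the identity yields the factorisation $\mathit{HOM} = P \cdot D \cdot M$ exactly as in the proof of Theorem~\ref{theo:lov}, where $D$ is diagonal with entries $1/\aut(F_k)$ and $M$ is upper-triangular with positive diagonal $\emb(F_k,F_k) \geq 1$. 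Both $D$ and $M$ are invertible.

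Let $\mathit{HOM}^{\mathrm{DA}}$ and $P^{\mathrm{DA}}$ denote the (rectangular) submatrices obtained by retaining only the rows indexed by DAGs of any size, so $\mathit{HOM}^{\mathrm{DA}} = P^{\mathrm{DA}} \cdot D \cdot M$. The hypothesis states precisely that the $G$- and $H$-columns of $\mathit{HOM}^{\mathrm{DA}}$ coincide. Since $DM$ is invertible, $G \cong H$ will follow once we show that $P^{\mathrm{DA}}$ has full column rank---equivalently, that the row vectors $(\epi(F,F_k))_{k=1}^t$ for $F$ a DAG span all of $\Real^t$. For a DAG column $F^* = F_{k_0}$, take the test DAG $\widetilde F^* := F^*$; the diagonal entry $\epi(F^*, F^*) = \aut(F^*) > 0$ and the size bound $\epi(F,F') \neq 0 \Rightarrow |F'| \leq |F|$ together provide a triangular-with-positive-diagonal contribution within the DAG part of the enumeration. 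For a cyclic $F^*$, the plan is to construct $\widetilde F^*$ as an acyclic unfolding---for instance a layered cover with vertex set $V(F^*) \times \{0,\ldots,\ell\}$ and edges $(u,i) \to (v,i+1)$ whenever $u \to v$ in $F^*$---and to argue by induction that $(\epi(F^*,F_k))_k$ lies in the span of the DAG rows together with the rows of digraphs that are strictly ``earlier'' in a suitable refined enumeration.

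The hardest step will be closing this induction for cyclic $F^*$: even small unfoldings admit many epimorphic images at the same vertex count as $F^*$. Already for $F^* = C_2$ and $\widetilde F^* = P_3$ (the directed path of length $2$), the size-$2$ quotients include $C_2$, the single directed edge, the digraph with two self-loops, and two distinct loop-plus-edge digraphs, each giving a nonzero entry in the row of $P^{\mathrm{DA}}$ indexed by $\widetilde F^*$. To force triangularity, the enumeration must be refined by a combinatorial invariant finer than size (for instance the cyclomatic number $|E(F)| - |V(F)|$ plus the number of weakly connected components, or alternatively the strongly-connected-component structure) so that $F^*$ sits last among its same-size epimorphic competitors and the correction term produced by $\widetilde F^*$ involves only genuinely earlier rows. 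Once this triangular structure is in place, the submatrix inverts, $P^{\mathrm{DA}}$ has full column rank, and the equality of the $G$- and $H$-columns of $\mathit{HOM}^{\mathrm{DA}}$ forces $G$ and $H$ to occupy the same position in the enumeration, hence $G \cong H$.
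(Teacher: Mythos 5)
The paper does not give a proof of this theorem (it is stated with citations to \cite{lov71} and \cite{bok19}), so I can only assess your argument on its own terms. Your setup is sound: the epi--emb M\"obius factorisation $\mathit{HOM}=PDM$ carries over verbatim to digraphs, the restriction $\mathit{HOM}^{\mathrm{DA}}=P^{\mathrm{DA}}DM$ is legitimate (the sum over intermediate $F''$ is automatically confined to $|F''|\le n$ because $\emb(F'',F_j)=0$ otherwise), and you are right that full column rank of $P^{\mathrm{DA}}$ would finish the proof. But that is precisely where the argument stops being a proof, and the gap is genuine, not merely cosmetic. Your plan is to exhibit, for each cyclic $F^*$, a DAG $\widetilde F^*$ whose row introduces $F^*$ as the unique ``new'' column relative to some refined ordering. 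The unfolding $F^*\times P_\ell$ (or $F^*\times T_\ell$) does give $\epi(\widetilde F^*,F^*)>0$, but it simultaneously hits several other digraphs with the \emph{same} order, \emph{same} edge count, and \emph{same} cyclomatic number as $F^*$, so neither of the invariants you float breaks the tie. In your own test case $F^*=C_2$, $\widetilde F^*=P_3$, the size-$2$ epimorphic images are $C_2$, the ``loop at the source plus an edge'' digraph, and the ``loop at the target plus an edge'' digraph, all with two vertices, two edges, and cyclomatic number zero relative to the weak component count; no proposed ordering makes $C_2$ ``last''. (As an aside, your stated list of those images is also wrong: the single directed edge and the two-self-loop digraph are \emph{not} epimorphic images of $P_3$, since its two edges can neither collapse to one edge nor both become loops under a surjective vertex map.) You also do not address whether full column rank of $P^{\mathrm{DA}}$ is even the right target: the theorem only needs the $G$- and $H$-columns of $\mathit{HOM}^{\mathrm{DA}}$ to be distinct when $G\not\cong H$, which is weaker, and an argument that overshoots to a statement that might be false is a risk worth flagging.

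A genuinely different and, I believe, closer-to-original route is to use categorical products with transitive tournaments $T_k$. If $\hom(F,G)=\hom(F,H)$ for all DAGs $F$, then for \emph{every} digraph $F$ one has $\hom(F,G\times T_k)=\hom(F,G)\hom(F,T_k)=\hom(F,H)\hom(F,T_k)=\hom(F,H\times T_k)$, because $\hom(F,T_k)=0$ whenever $F$ has a directed cycle or loop. By the directed version of Theorem~\ref{theo:lov}, $G\times T_k\cong H\times T_k$ for every $k$; both are DAGs, and one then argues that for $k$ large enough this forces $G\cong H$, which is the cancellation-type statement that is the actual content of \cite{lov71}. This approach avoids any rank analysis of $P^{\mathrm{DA}}$ and localises the difficulty in a single reconstruction/cancellation lemma. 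If you want to pursue your matrix route instead, you would need to prove the full-column-rank claim outright; at the moment it is a conjecture inside your proof, and the heuristic for it does not survive your own example.
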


It is straightforward to extend Theorem~\ref{theo:lov},
Theorem~\ref{theo:treewidth} (for the natural generalisation of the WL
algorithms to relational structures), and Theorem~\ref{theo:treedepth} to
arbitrary relational structures. This is very useful for binary
relational structures such as knowledge graphs. But for relations of
higher arity one may consider another version based on the incidence
graph of a structure. 

Let $\sigma=\{R_1,\ldots,R_m\}$ be a relational vocabulary, where
$R_i$ is a $k_i$-ary relation symbol. Let $k$ be the maximum of the
$k_i$. We let $\sigma_I:=\{E_1,\ldots,E_k,P_1,\ldots,P_m\}$, where the
$E_i$ are binary and the $P_j$ are unary relation symbols. With every
$\sigma$-structure $A=(V(A),R_1(A),\ldots,R_m(A))$ we associates a
$\sigma_I$-structure $A_I$ called the \emph{incidence structure} of
$A$ as follows:
\begin{itemize}
\item the universe of $A_I$ is
  \[
    V(A_I):=V(A)\cup\bigcup_{i=1}^m\big\{(R_i,v_1,\ldots,v_{k_i}\bigmid
    (v_1,\ldots,v_{k_i})\in R_i(A)\big\};
  \]
\item for $1\le j\le k$, the relation $E_j(A_I)$ consists of all pairs
  \[
    \big(v_j,(R_i,v_1,\ldots,v_{k_i})\big)
  \]
  for $(R_i,v_1,\ldots,v_{k_i})\in V(A_I)$ with $k_i\ge j$;
\item
  for $1\le i\le m$, the relation $P_i$ consist of all $(R_i,v_1,\ldots,v_{k_i})\in V(A_I)$.
\end{itemize}
With this encoding of general structures as binary incidence
structures we obtain the following corollary.

\begin{corollary}
  For all $\sigma$-structures $A$ and $B$, the following are
  equivalent.
  \begin{enumerate}
  \item $\Hom_{\CT(\sigma_I)}(A_I)=\Hom_{\CT(\sigma_I)}(B_I)$, where
    $\CT(\sigma_I)$ denotes the class of all $\sigma_I$-structures
    whose underlying (Gaifman) graph is a tree;
  \item $A_I$ and $B_I$ are not distinguished by $1$-WL;
  \item $A_I$ and $B_I$ are $\LC^2$-equivalent.
  \end{enumerate}
\end{corollary}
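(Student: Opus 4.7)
The plan is to derive all three equivalences by applying, to the incidence structures $A_I$ and $B_I$, the $k=1$ cases of Theorem~\ref{theo:cfi} (Cai-F\"urer-Immerman) and Theorem~\ref{theo:treewidth} (Dvor\'ak) in their extensions to labelled binary relational structures. The vocabulary $\sigma_I$ consists only of unary symbols $P_i$ and binary symbols $E_j$, so $A_I$ and $B_I$ can be regarded as vertex- and edge-labelled (directed) graphs, on which the labelled variants of $1$-WL described in Section~\ref{sec:wl} and the counting logic $\LC^2$ (augmented by atomic formulas for the labels) are defined in the natural way.

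For the equivalence (2) $\iff$ (3), I would invoke the labelled version of Theorem~\ref{theo:cfi} for $k=1$: two labelled structures satisfy the same $\LC^2$-sentences precisely when $1$-WL fails to distinguish them. For (1) $\iff$ (2), I would apply the labelled version of Theorem~\ref{theo:treewidth} for $k=1$. The connected graphs of tree width at most $1$ are precisely the trees, and since homomorphism numbers factor over connected components, homomorphism indistinguishability over the class $\CT(\sigma_I)$ of $\sigma_I$-structures whose Gaifman graph is a tree coincides with homomorphism indistinguishability over the class of all $\sigma_I$-structures whose Gaifman graph has tree width at most $1$. Dvor\'ak's theorem in its relational form then gives the equivalence with $1$-WL indistinguishability of $A_I$ and $B_I$, and combining the two chains yields (1) $\iff$ (2) $\iff$ (3).

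The main obstacle is verifying that the generalisations of Theorems~\ref{theo:cfi} and \ref{theo:treewidth} to labelled relational structures, which the text announces as straightforward, really do go through. For Cai-F\"urer-Immerman this is a classical adaptation: labels only refine the initial WL colouring and correspond to atomic predicates in $\LC^2$. For Dvor\'ak's theorem one has to track the labels through the combinatorial argument that builds on Lov\'asz's theorem (Theorem~\ref{theo:lov}); since labels merely restrict the homomorphisms counted and interact multiplicatively with the epi/embedding decomposition, the triangularity and invertibility arguments from the proof of Theorem~\ref{theo:lov} carry over with only minor modifications, so no further ingredient beyond what is announced in the text preceding the corollary is needed.
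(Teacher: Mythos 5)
Your proposal is correct and matches the route the paper has in mind: the corollary is stated immediately after the paper announces (without detailed proof) that Theorems~\ref{theo:lov}, \ref{theo:treewidth}, and \ref{theo:cfi} extend to arbitrary relational structures, and it is obtained precisely by specialising the $k=1$ cases of the extended Theorems~\ref{theo:treewidth} and \ref{theo:cfi} to the binary incidence structures $A_I$ and $B_I$. Your two small bookkeeping observations---that labels only refine the initial $1$-WL colouring and correspond to atomic formulas of $\LC^2$, and that the passage from tree width $\le 1$ (forests) to trees is harmless because $\hom(F,G)$ factors over connected components of $F$---are exactly the details the paper leaves implicit.
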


Böker~\cite{bok19} gave a generalisation of
Theorem~\ref{theo:treewidth} to hypergraphs that is also based on
incidence graphs.

In a different direction, we can generalise the results to weighted
graphs. Let us consider undirected graphs with real-valued edge
weights. We can also view them as symmetric matrices over the
reals. Recall that we denote the weight of an edge $uv$ by
$\alpha(u,v)$ and that weighted 1-WL refines by sums of edge
weights (instead of numbers of edges). Let $F$ be an unweighted graph and
$G$ a weighted graph. For every mapping $h:V(F)\to V(G)$ we let
\[
  \wt(h):=\prod_{uu'\in E(F)}\alpha(h(u),h(u')).
\]
As $\alpha(v,v')=0$ if and only if $vv'\not\in E(G)$, we have 
$\wt(h)\neq0$ if and only if $h$ is a homomorphism from $F$
to $G$; the weight of this homomorphism is the product of the weights of
the edges in its image.
We let
\[
  \hom(F,G):=\sum_{h:V(F)\to V(G)}\wt(h).
\]
In statistical physics, such sum-product functions are known as
\emph{partition functions}. For a class $\CF$ of graphs, we let
\[
\Hom_{\CF}(G):=\big(\hom(F,G)\bigmid F\in\CF\big).
\]

\begin{theorem}[\cite{bulgrorat20+}]
    For all weighted graphs $G$ and $H$, the following are
  equivalent.
  \begin{enumerate}
  \item $\Hom_{\CT}(G)=\Hom_{\CT}(H)$ (recall that $\CT$ denotes the
    class of all trees);
  \item $G$ and $H$ are not distinguished by weighted $1$-WL;
  \item equations
\eqref{eq:4} and \eqref{eq:5} have a nonnegative rational solution.
  \end{enumerate}
\end{theorem}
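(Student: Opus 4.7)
The equivalence (2)$\Leftrightarrow$(3) is the weighted analogue of Theorem~\ref{theo:fractional}, already noted in the excerpt to hold by \cite{grokermla+14}: the proof of Tinhofer's theorem goes through essentially verbatim once one replaces ``number of neighbours of colour $d$'' by ``sum of edge weights into colour $d$'', and this matches the doubly stochastic matrix equations $AX=XB$, $X\mathbf{1}=\mathbf{1}$, $X^\top\mathbf{1}=\mathbf{1}$, $X\ge 0$, where $A,B$ are now the weighted adjacency matrices of $G,H$. The real work is therefore in the equivalence (1)$\Leftrightarrow$(2).

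For (2)$\Rightarrow$(1) I would run weighted $1$-WL on the disjoint union of $G$ and $H$ and denote the stable colouring by $\chi$. For a rooted tree $T$ with root $r$ and a vertex $v$, let $\hom_v(T,G)$ denote the sum of $\wt(h)$ over all maps $h:V(T)\to V(G)$ with $h(r)=v$. Using the recursion
\[
\hom_v(T,G)\;=\;\prod_i\sum_w\alpha(v,w)\,\hom_w(T_i,G)
\]
over the subtrees $T_i$ hanging off $r$, I would show inductively on $|V(T)|$ that $\hom_v(T,G)$ depends only on $\chi(v)$: by the inductive hypothesis each inner sum equals $\sum_c h_i(c)\cdot s_{v,c}$ with $s_{v,c}:=\sum_{w:\chi(w)=c}\alpha(v,w)$, and the $s_{v,c}$ are functions of $\chi(v)$ alone by stability of $\chi$. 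Summing over $v$ and grouping by colour, $\hom(T,G)=\sum_c n_G(c)\,h(c,T)=\sum_c n_H(c)\,h(c,T)=\hom(T,H)$, using $n_G(c)=n_H(c)$ from (2).

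For (1)$\Rightarrow$(2) I would argue by contraposition. The identity $\hom(T,G)-\hom(T,H)=\sum_c(n_G(c)-n_H(c))\,h(c,T)$ from the previous step reduces the task to showing that the functions $c\mapsto h(c,T)$, as $T$ ranges over rooted trees, linearly span $\Real^C$ over the finite set $C$ of stable colours; then some $T$ must have nonzero inner product with the nonzero vector $(n_G(c)-n_H(c))_c$. I would prove the spanning statement by induction on the WL round index: the ``add an edge above the root'' operation turns any vector $h(\cdot,T')$ into $v\mapsto\sum_w\alpha(v,w)\,h(\chi(w),T')$, which by the inductive hypothesis (using linear combinations of previously obtained $T'$) realises any sum $s_{v,c'}$; and stars with $k$ identical legs produce the coordinatewise $k$-th powers of such vectors, so that a Vandermonde argument on the distinct $s$-values separates all round-$(r{+}1)$ colours refining a common round-$r$ class.

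The main obstacle is this last spanning argument. In the unweighted setting of Theorem~\ref{theo:treewidth} a round-$r$ colour can be identified directly with a finite rooted tree (see Figure~\ref{fig:colours}), but for real-valued weights the colours are tuples of real numbers admitting no such combinatorial enumeration; separating them requires an algebraic argument combining the neighbourhood recursion, Hadamard-product closure (obtained from trees whose subtrees share a root), and power-sum separation. Making these ingredients interact cleanly with the induction on WL rounds, while keeping the tree family finite, is the delicate point I would have to nail down.
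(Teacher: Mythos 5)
The paper does not actually prove this theorem---it is cited to \cite{bulgrorat20+}, a work in preparation---so there is no proof in the paper to compare against. The nearest analogue is the proof sketch of Theorem~\ref{theo:node-treewidth} and its use in deriving Theorem~\ref{theo:treewidth} for trees, the unweighted case. Your delegation of (2)$\Leftrightarrow$(3) to \cite{grokermla+14} and your (2)$\Rightarrow$(1) recursion on subtrees parallel that sketch and are sound.

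Your (1)$\Rightarrow$(2) argument diverges from the paper's unweighted route in a way worth flagging. You aim to prove that the vectors $(h(c,T))_{c\in C}$ span $\Real^C$ by an induction on WL rounds using stars and a Vandermonde argument. The paper's unweighted sketch instead proves only the weaker fact that distinct stable colours are separated by some rooted tree, then closes under the ``identify the roots'' construction to turn a finite separating family $T_1,\dots,T_m$ into trees whose rooted hom counts are the monomials $\prod_i a_{ij}^{d_i}$, and finally invokes \cite[Lemma~4.2]{gro20a} to conclude $p_j=q_j$ from separation plus this multiplicative closure. Your Vandermonde-on-stars idea is essentially a special case of that mechanism, but folding it into the round-by-round WL induction is precisely what makes your sketch fragile; it would be cleaner to isolate it as a lemma, as the paper does. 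The difficulty you flag is real and is where the weighted case genuinely departs from the unweighted one: with real edge weights the entries $h(c,T)$ can be negative or zero, so distinctness of base values does not immediately give distinctness of powers, and moment-separation lemmas of the Lemma~4.2 type typically assume nonnegative integer entries. Whether that is a routine modification or the technical heart of \cite{bulgrorat20+} cannot be determined from this paper alone.
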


\subsection{Complexity}

In general, counting the number of homomorphisms from a graph $F$ to a
graph $G$ is a $\#\textsf{P}$-hard problem. Dalmau and Jonsson~\cite{daljon04}
proved that, under the reasonable complexity theoretic assumption
$\#\textsf{W[1]}\neq\textsf{FPT}$ from parameterised complexity
theory, for all classes $\CF$ of graphs, computing $\hom(F,G)$ given a graph
$F\in\CF$ and an arbitrary graph $G$, is in polynomial time if and
only if $\CF$ has bounded tree width. This makes
Theorem~\ref{theo:treewidth} even more interesting, because the
entries of a homomorphism vector $\Hom_{\CF}(G)$ are computable in
polynomial time precisely for bounded tree width classes.

However, the computational problem we are facing is not to compute
individual entries of a homomorphism vectors, but to decide if two
graphs have the same homomorphism vector, that is, if they are homomorphism
indistinguishable. The characterisation theorems of
Section~\ref{sec:hi} imply that homomorphism
indistinguishability is polynomial-time decidable over the classes $\CP$
of paths, $\CC$ of cycles, $\CT$ of trees, $\CT_k$ of graphs of tree
width at most $k$, $\CTD_k$ of tree depth at most $k$. Moreover,
homomorphism indistinguishability over the class of $\CG$ of all graphs
is decidable in quasi-polynomial time by Babai's \cite{bab16}
celebrated result that graph isomorphism is decidable in
quasi-polynomial time. It was proved in \cite{bokchegrorat19} that
homomorphism indistinguishability over the class of complete graphs is
complete for the complexity class $\textsf{C}_=\textsf{P}$, which
implies that it is \textsf{co-NP} hard, and that there is a polynomial
time decidable class $\CF$ of graphs of bounded tree width such that
homomorphism indistinguishability over $\CF$ is undecidable. Quite
surprisingly, the fact that quantum isomorphism is undecidable
\cite{atsmanrob+19} implies that homomorphism distinguishability
over the class of planar graphs is undecidable.

\subsection{Homomorphism Node Embeddings}
\label{sec:hom-ne}

So far, we have defined graph and structure embeddings based on
homomorphism vectors. But we can also use homomorphism vectors to define
node embeddings. A \emph{rooted graph} is a pair $(G,v)$ where $G$ is a
graph and $v\in
V(G)$. For two rooted graphs $(F,u)$ and $(G,v)$, by $\hom(F,G;u\mapsto
v)$ we denote the number of homomorphism $h$ from $F$ to $G$ with
$h(u)=v$. For a class $\CF^*$ of rooted graphs and a rooted graph
$(G,v)$, we let
\[
\Hom_{\CF^*}(G,v):=\big(\hom(F,G;u\mapsto v)\bigmid (F,u)\in\CF^*\big).
\]
If we keep the graph $G$ fixed, this gives us an embedding of the
nodes of $G$
into an infinite dimensional vector space. Note that in the terminology
of Section~\ref{sec:node-emb}, this embedding is ``inductive'' and
not ``transductive'', because it is not tied to a fixed graph. (Nevertheless,
the term ``inductive'' is not fitting very well here, because the
embedding is not learned.) In the same way we defined graph kernels
based on homomorphism vectors of graphs, we can now define node kernels.

It is straightforward to generalise Theorem~\ref{theo:lov} to rooted
graphs, showing that for all rooted graphs $(G,v)$ and $(H,w)$ it
holds that
\[
\Hom_{\CG^*}(G,v)=\Hom_{\CG^*}(H,w)\iff\parbox[t]{3.5cm}{there is an
  isopmorphism $f$ from $G$ to $H$ with $f(v)=w$.}
\]
Here $\CG^*$ denote the class of all rooted graphs. Maybe the easiest
way to prove this is by a reduction to node-labelled graphs.

Another key result of Section~\ref{sec:hi} that can be adapted to the
node setting is Theorem~\ref{theo:treewidth}. We only state the version
for trees.

\begin{theorem}\label{theo:node-treewidth}
  For all graphs $G,H$ and all $v\in V(G),w\in V(H)$, the following are
  equivalent.
  \begin{enumerate}
  \item $\Hom_{\CT^*}(G,v)=\Hom_{\CT^*}(H,w)$ for the class $\CT^*$ of
    all rooted trees;
  \item $1$-WL assigns the same colour to $v$ and $w$.
  \end{enumerate}
\end{theorem}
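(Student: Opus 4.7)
The plan is to prove both directions by induction, mirroring the proof sketched for Theorem~\ref{theo:treewidth} in the tree case.

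For the direction (2)$\Rightarrow$(1), I would induct on the height $h$ of a rooted tree $(T,u)\in\CT^*$ and show that $\hom(T,G;u\mapsto v)$ depends only on the stable $1$-WL colour of $v$. The base $h=0$ is trivial. For the step, writing the children of $u$ as $u_1,\ldots,u_k$ rooting subtrees $(T_i,u_i)$ of strictly smaller height,
\[
  \hom(T,G;u\mapsto v)\;=\;\prod_{i=1}^{k}\sum_{v'\in N(v)}\hom(T_i,G;u_i\mapsto v').
\]
By IH, each $\hom(T_i,G;u_i\mapsto v')$ depends only on the colour of $v'$, so each inner sum is determined by the multiset of colours of $v$'s neighbours, which by stability is determined by the colour of $v$.

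For (1)$\Rightarrow$(2), I would induct on the rounds of $1$-WL with a strengthened hypothesis. Let $c^r$ denote the colouring after round $r$ and let $\vec{x}_v^{(r)}$ be the restriction of $\Hom_{\CT^*}(G,v)$ to rooted trees of height at most $r$. The claim is (a) $c^r(v)=c^r(w)\Leftrightarrow\vec{x}_v^{(r)}=\vec{x}_w^{(r)}$, and (b) the vectors $\vec{x}_C^{(r)}$, one per $c^r$-class $C$, are linearly independent in $\Real^{\CT^*}$ restricted to height-$\le r$ coordinates. In the step, the nontrivial half of (a) uses height-$(r+1)$ trees whose root has a single child rooting some $(T_1,u_1)$ of height $\le r$: the hom count at $v$ equals $\sum_Y M_v(Y)\,\tilde f_{T_1,u_1}(Y)$, where $Y$ ranges over $c^r$-classes, $M_v(Y)$ counts the $Y$-coloured neighbours of $v$, and $\tilde f_{T_1,u_1}(Y)$ is the common value of $\hom(T_1,\cdot;u_1\mapsto\cdot)$ on $Y$ (well-defined by IH(a)). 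Equating across all $(T_1,u_1)$ and invoking IH(b) gives $M_v=M_w$, so $c^{r+1}(v)=c^{r+1}(w)$.

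The main obstacle is propagating (b). Given a relation $\sum_C d_C\vec{x}_C^{(r+1)}=0$, restricting to height $\le r$ and applying IH(b) yields $\sum_{C\subseteq X} d_C=0$ for each $c^r$-class $X$. I would then use trees of height $r+1$ whose root has several children: their hom counts are products of the star-type expressions above, and forming ``virtual'' linear combinations of trees via IH(b) to project onto individual $c^r$-classes converts the relation into
\[
  \sum_C d_C\,M_C(Y_1)\cdots M_C(Y_k)\;=\;0
\]
for every tuple $(Y_1,\ldots,Y_k)$ of $c^r$-classes, hence $\sum_C d_C\,P(M_C)=0$ for every polynomial $P$ in variables indexed by $c^r$-classes. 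The final ingredient is injectivity of the map $C\mapsto M_C$: if $M_v=M_w$ at level $r$, then by coarsening $M_v^{(s)}=M_w^{(s)}$ for all $s\le r$, and the recursion $c^{s+1}(v)=(c^s(v),M_v^{(s)})$ together with the initial constancy of $c^0$ on unlabelled graphs forces $c^{r+1}(v)=c^{r+1}(w)$. With injectivity in hand, Lagrange interpolation on the finite image of $C\mapsto M_C$ yields $d_C=0$, closing the induction.
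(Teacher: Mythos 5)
Your proof is correct, and the direction $(1)\Rightarrow(2)$ is handled by a genuinely different route than the paper's. For $(2)\Rightarrow(1)$ you and the paper use essentially the same idea: the paper phrases it via the ``crucial observation'' that $\hom(S,G;r\mapsto v)$ counts rooted-tree homomorphisms into the colour tree of $v$, while you unfold the same product/recursion formula as an induction on tree height; these are the same argument in different clothing. For $(1)\Rightarrow(2)$, however, the paper identifies $1$-WL colours with (finite truncations of) rooted colour trees and then invokes a Lov\'asz-style triangular-matrix argument (\`a la Theorem~\ref{theo:lov}) to show that non-isomorphic rooted trees are separated by rooted-tree homomorphism counts. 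You instead run an induction on WL \emph{rounds}, maintaining two invariants --- that the round-$r$ colour and the height-$\le r$ restriction of $\Hom_{\CT^*}$ determine each other, and that the resulting per-class vectors are linearly independent --- and close the induction by gluing trees at the root to realise arbitrary monomials in the neighbour-count vector $M_C$, followed by multivariate Lagrange interpolation over the finite injective image of $C\mapsto M_C$. Notably, this ``glue at the root and interpolate'' device is precisely the one the paper itself deploys in its proof sketch of Theorem~\ref{theo:treewidth} for trees (the construction $T^{(\vec d)}$ and the appeal to \cite[Lemma~4.2]{gro20a}), so you are in effect proving the node version by the technique the paper reserves for the graph version. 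Both are sound; the paper's colour-tree route is conceptually crisper and makes the tree/WL correspondence visible, whereas yours is more self-contained and closer to Dvo\v{r}\'ak's original method. The one step in your write-up worth making fully explicit is the injectivity of $C\mapsto M_C$ across $c^{r+1}$-classes: your coarsening argument ($M^{(r)}_v=M^{(r)}_w\Rightarrow M^{(s)}_v=M^{(s)}_w$ for $s\le r$, then unwind the recursion from the constant $c^0$) is correct, but it silently uses that the $c^s$-multiset is a function of the $c^r$-multiset via the refinement map, which deserves a sentence.
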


This result is implicit in the proof of Theorem~\ref{theo:treewidth}
(see \cite{dvo10,delgrorat18}). In fact, it can be viewed as the graph
theoretic core of the proof. We sketch the proof here and also show
how to derive Theorem~\ref{theo:treewidth} (for trees) from it.

\begin{proof}[Proof sketch]
  Recall from Section~\ref{sec:wl-kernels} that we can view the
  colours assigned by $1$-WL as rooted trees (see
  Figure~\ref{fig:colours}). For the $k$th round of WL, this is a tree
  of height $k$, and for the stable colouring we can view it as an
  infinite tree. Suppose now that the colour of a vertex $v$ of $G$
  is $T$. The crucial observation is that for every
  rooted tree $(S,r)$ the number $\hom(S,G;r\mapsto v)$ is precisely
  the number of mappings $h:V(S)\to V(T)$ that map the root $r$ of $S$
  to the
  root of $T$ and, for each node $s\in V(S)$, map the children of $s$
  to the children of $h(s)$ in $T$. Let us call such mappings
  \emph{rooted tree homomorphisms}.

  Implication (2)$\implies$(1) follows directly from this
  observation. Implication (1)$\implies$(2) follows as well, because
  by an argument similar to that used in the proof of
  Theorem~\ref{theo:lov} it can be shown that for distinct rooted
  trees $T,T'$ there is a rooted tree that has distinct numbers of
  rooted tree homomorphisms to $T,T'$.
\end{proof}

\begin{corollary}\label{cor:home-ne}
  For all graphs $G,H$ and all $v\in V(G),w\in V(H)$, the following are
  equivalent.
  \begin{enumerate}
  \item $\Hom_{\CT^*}(G,v)=\Hom_{\CT^*}(H,w)$;
  \item for all formulas $\phi(x)$ of the logic $\LC^2$,
    \[
      G\models\phi(v)\iff H\models\phi(w).
    \]
  \end{enumerate}
\end{corollary}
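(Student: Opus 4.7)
The plan is to combine Theorem~\ref{theo:node-treewidth} with the pointed (one-free-variable) analogue of Theorem~\ref{theo:cfi} at dimension $k=1$. Theorem~\ref{theo:node-treewidth} already identifies condition (1) with ``$1$-WL assigns the same stable colour to $v$ and $w$'', so it suffices to show that the stable $1$-WL colour of a vertex is a complete invariant of its $\LC^2$-type with a single free variable.

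For the direction ``same $\LC^2$-type $\Rightarrow$ same stable colour'' I would construct, by induction on $i$, an $\LC^2$-formula $\phi_{i,c}(x)$ defining the round-$i$ colour class $c$. At round $0$ all vertices share a colour and $\phi_{0}(x)$ is $x=x$. If the round-$(i+1)$ colour $c'$ refines a round-$i$ colour $c$ by requiring, for each round-$i$ colour $d$, exactly $n_d$ neighbours of colour $d$, then
\[
  \phi_{i+1,c'}(x)\;\equiv\;\phi_{i,c}(x)\;\wedge\;\bigwedge_{d}\exists^{=n_d}\,y\bigl(E(x,y)\wedge\phi_{i,d}(y)\bigr).
\]
Since $1$-WL stabilises after at most $|V(G)|+|V(H)|$ rounds, the stable colour of every vertex is $\LC^2$-definable, so agreement of the $\LC^2$-type forces agreement of the stable colour.

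For the converse I would run a simultaneous induction on formulas with one and two free variables, showing that the $\LC^2$-type of a single vertex is determined by its stable colour and that the $\LC^2$-type of an ordered pair $(u,u')$ is determined by the pair of stable colours of $u,u'$ together with the truth value of $E(u,u')$. Atoms and Boolean connectives are routine; the critical case is the counting quantifier $\exists^{\ge p}y\,\psi(x,y)$ with $\psi$ genuinely two-variable. Here I partition the candidate $y$'s by the pair (stable colour of $y$, is $E(v,y)$ true?), use the two-variable induction hypothesis to argue that the truth of $\psi(v,y)$ depends only on these data, and read off the cardinality of each class from the stable colour of $v$.

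The main obstacle is precisely this last case: a priori the $\LC^2$ counting quantifier ranges over the whole vertex set, whereas a single refinement round of $1$-WL records only the immediate neighbourhood profile. The resolution is that stability iterates the neighbourhood packaging, so the stable colour of $v$ encodes the full unfolding tree at $v$ (as illustrated in Figure~\ref{fig:colours}) and thereby pins down the distribution of (colour, adjacency-to-$v$) labels needed to count witnesses; together with the two-variable strengthening of the induction this closes the argument, and invoking Theorem~\ref{theo:node-treewidth} delivers the corollary.
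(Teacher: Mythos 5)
Your overall plan --- combine Theorem~\ref{theo:node-treewidth} with the pointed version of the Cai--F\"urer--Immerman correspondence at dimension $1$ --- is the argument the paper leaves implicit, and the direction ``same $\LC^2$-type $\Rightarrow$ same stable colour'' is fine: the round-$i$ colour classes are $\LC^2$-definable exactly as you write, and since $1$-WL stabilises after finitely many rounds, agreement on $\LC^2$-formulas forces agreement of the stable colour.

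There is, however, a genuine gap in the converse direction, at the counting-quantifier step, and it sits precisely where you flagged the ``main obstacle''. To evaluate $\exists^{\ge p}y\,\psi(x,y)$ at $v$ you partition potential witnesses $u$ by (stable colour of $u$, whether $E(v,u)$, whether $u=v$) and must then supply the cardinality of each cell. The cells with $E(v,u)$ true are indeed counted by the stable colour of $v$ --- this is what the unfolding tree of Figure~\ref{fig:colours} records. But the cells with $E(v,u)$ false are not: their size equals the total number of vertices of colour $d$ in $G$ minus the number of $d$-coloured neighbours of $v$, and the first summand is a global quantity that the stable colour of $v$ does not determine. The unfolding tree of $v$ sees nothing outside the connected component of $v$, and not even the size of that component, so your proposed resolution (``stability iterates the neighbourhood packaging'') does not close the gap: iterated aggregation only ever collects information reachable from $v$.

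This is not a cosmetic slip --- the statement is slightly too strong as written. Take $G$ to consist of a single vertex $v$, and $H$ to be the disjoint union of an isolated vertex $w$ and a triangle. No rooted tree with an edge admits a homomorphism into either pointed graph, so $\Hom_{\CT^*}(G,v)=\Hom_{\CT^*}(H,w)$, and $1$-WL indeed gives $v$ and $w$ the same colour; yet the $\LC^2$-formula $\exists^{\ge 2}y\,(y=y)$ holds at $w$ and fails at $v$. (For a connected example, compare a vertex of $C_6$ with a vertex of $C_3$: both sides are $2$-regular, so all rooted-tree homomorphism counts agree, but $\exists^{\ge 4}y\,(y=y)$ separates them.) Both the corollary and your argument become correct once one additionally assumes that $1$-WL does not distinguish $G$ from $H$ as graphs --- equivalently, that $G$ and $H$ agree on all $\LC^2$-\emph{sentences} --- for then each stable colour class has the same cardinality in $G$ and $H$, which supplies exactly the missing counts in your induction.
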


Note that the node embeddings based on homomorphism vectors are quite
different from the node embeddings described in
Section~\ref{sec:node-emb}. They are solely based on structural properties
and ignore the distance information. Results like
Corollary~\ref{cor:home-ne} show that the structural information
captured by the homomorphism-based embeddings in principle enables us to answer
queries directly on the embedding, which may be more useful than
distance information in database applications.

We close this section by sketching how Theorem~\ref{theo:treewidth}
(for trees) follows from Theorem~\ref{theo:node-treewidth}.

\begin{proof}[Proof sketch of Theorem~\ref{theo:treewidth} (for trees)]
  Let $G$, $H$ be graphs. We 
need to prove 
  \begin{equation}
    \label{eq:7}
     \Hom_{\CT}(G)=\Hom_{\CT}(H)\iff \text{$1$-WL does not
      distinguish $G$ and $H$.}
   \end{equation}
  Without loss of generality we assume that
  $V(G)\cap V(H)=\emptyset$. For $x,y\in V(G)\cup V(H)$, we write
  $x\sim y$ if $1$-WL assigns the same colour to $x$ and
  $y$. By Theorem~\ref{theo:node-treewidth}, for
  $X,Y\in\{G,H\}$ and $x\in V(X)$, $y\in V(Y)$ we have $x\sim y$ if
  and only if
  $\Hom_{\CT^*}(X,x)=\Hom_{\CT^*}(X,y)$. Let $R_1,\ldots,R_n$ be the
$\sim$-equivalence classes, and for every $j$, let $P_j:=R_j\cap V(G)$
and $Q_j:=R_j\cap V(H)$. Furthermore, let $p_j:=|P_j|$ and
$q_j:=|Q_j|$.

We first prove the backward direction of \eqref{eq:7}. Assume that
$1$-WL does not distinguish $G$ and $H$. Then $p_j=q_j$ for all
$j\in[n]$. Let $T$ be a tree, and let $t\in V(T)$.
Let $h_j:=\hom(T,X;t\mapsto x)$ for $x\in R_j$ and $X\in\{G,H\}$
with $x\in V(X)$. Then
\begin{align*}
  \hom(T,G)&=\sum_{v\in V(G)}\hom(T,G;t\mapsto v)\\
           &=\sum_{j=1}^np_jh_j=\sum_{j=1}^nq_jh_j\\
           &=\sum_{w\in V(H)}\hom(T,H;t\mapsto w)=\hom(T,H).
\end{align*}
Since $T$ was arbitrary, this proves
$\Hom_{\CT}(G)=\Hom_{\CT}(H)$.

The proof of the forward direction of \eqref{eq:7} is more
complicated. Assume $\Hom_{\CT}(G)=\Hom_{\CT}(H)$.
  There is
  a finite collection
  of $m\le \binom{n}{2}$ rooted trees $(T_1,r_1),\ldots,(T_m,r_m)$ such that for all
  $X,Y\in\{G,H\}$ and $x\in V(X)$, $y\in V(Y)$ we have $x\sim y$ if
  and only if for all $i\in[m]$,
  \[
    \hom(T_i,X;r_i\mapsto x)=\hom(T_i,Y;r_i\mapsto y).
  \]
  Let $a_{ij}:=\hom(T_i,X;r_i\mapsto x)$ for $x\in R_j$ and $X\in\{G,H\}$
  with $x\in V(X)$. Then for all $i$ we have
  \begin{equation*}
    \label{eq:8}
    \sum_{j=1}^na_{ij}p_j=\hom(T_i,G)=\hom(T_i,H)=\sum_{j=1}^na_{ij}q_j
  \end{equation*}
  Unfortunately, the matrix $A=(a_{ij})_{i\in[m],j\in[n]}$ is not
  necessarily invertible, so we cannot directly conclude that
  $p_j=q_j$ for all $j$. All we know is that for any two columns of
  the matrix there is a row such that the two columns have distinct
  values in that row. It turns out that this is sufficient. For every
  vector $\vec d=(d_1,\ldots,d_m)$ of nonnegative integers, let
  $(T^{(\vec d)},r^{(\vec d)})$ be the rooted tree obtained by taking
  the disjoint union of $d_i$ copies of $T_i$ for all $i$ and then
  identifying the roots of all these trees. It is easy to see that
    \[
      \hom(T^{(\vec d)},X;r^{(\vec d)}\mapsto
      x)=\prod_{i=1}^m\hom(T_i,X;r_i\mapsto x).
    \]
    Thus, letting $a_j^{(\vec d)}:=\prod_{i=1}^ma_{ij}^{d_i}$, we have
    \begin{equation*}
      \label{eq:9}
      \sum_{j=1}^na_j^{(\vec d)}p_j=\hom(T^{(d)},G)=\hom(T^{(d)},H)=\sum_{j=1}^na_j^{(d)}q_j.
    \end{equation*}
    Using these additional equations, it can be shown that $p_j=q_j$
    for all $j$ (see \cite[Lemma~4.2]{gro20a}). Thus WL does not
    distinguish $G$ and $H$.
\end{proof}

\subsection{Homomorphisms and GNNs}
We have a correspondence between homomorphism vectors and and the
Weisfeiler-Leman algorithm (Theorems~\ref{theo:treewidth} and
\ref{theo:node-treewidth}) and between the the WL algorithm and GNNs
(see Section~\ref{sec:wl-gnn}). This also establishes a correspondence
between homomorphism vectors and GNNs. More directly, the correspondence
between GNNs and homomorphism counts is also studied in \cite{maehoa19}.

\section{Similarity}
\label{sec:sim}
The results described in the previous section can be interpreted as
results on the expressiveness of homomorphism-based embeddings of
structures and their nodes. However, all these results only show what
it means that two objects are mapped to the same homomorphism
vector. More interesting is the similarity measure the
vector embeddings induce via some an inner product/norm on
the latent space (see \eqref{eq:30}). We can
speculate that, given the nice results regarding equality of vectors,
the similarity measure will have similarly nice properties.
Let me propose the following, admittedly vague, hypothesis.
\begin{quote}
  \itshape
  For suitable
classes $\CF$, the homomorphism embedding
$\Hom_{\CF}$ combined with a suitable inner product on the latent space
induces a natural similarity measure on graphs or relational structures.
\end{quote}
From a practical perspective, we could support this hypothesis by showing that the
vector embeddings give good results when combined with similarity
based downstream tasks. As mentioned earlier, initial experiments show that homomorphism
vectors in combination with support vector machines perform well on
standard graph classification benchmarks. But
a more thorough experimental study will be required to have conclusive
results.

From a theoretical perspective, we can compare the homomor\-phism-based
similarity measures with other similarity measures for graphs and
discrete structures. If we can prove that they
coincide or are close to each other, then this would support our hypothesis.

\subsection{Similarity from Matrix Norms}
A standard way of defining similarity measures on graphs is based on
comparing their adjacency matrices. Let us briefly review a few matrix
norms. Recall the standard $\ell_p$-vector norm $\|\vec
x\|_p:=\left(\sum_i|x_i|^p\right)^{1/p}$. \footnote{Note that
  $\|\vec x\|_2$ is just the
Euclidean norm, which we denoted by $\|\vec x\|$ earlier in this
paper.} 
The two best-known matrix norms are the \emph{Frobenius norm}
$\|M\|_F:=\sqrt{\sum_{i,j}M_{ij}^2}$ and the \emph{spectral norm}
$\|M\|_{\angles2}:=\sup_{\vec x\in\Real^n,\|\vec x\|_2=1}\|M\vec
x\|_2$. More generally, for every $p>0$ we define
\begin{align*}
\|M\|_p&:=\left(\sum_{i,j}|M_{ij}|^p\right)^{1/p}\\
\intertext{(so
$\|M\|_F=\|M\|_2$) and}
\|M\|_{\angles p}&:=\sup_{\vec x\in\Real^n,\|\vec x\|_p=1}\|M\vec
x\|_p,.
\end{align*} 
Thus both $\|M\|_p$ and  $\|M\|_{\angles p}$ are derived from
the $\ell_p$-vector norm. For $\|M\|_p$, the matrix is simply
flattened into a vector, and for $\|M\|_{\angles p}$ the matrix is
viewed as a linear operator. Another matrix norm that is interesting here is the \emph{cut
  norm} defined by
\[
\|M\|_{\square}:=\max_{S,T}\Big|\sum_{i\in S,j\in T}M_{ij}\Big|,
\]
where $S, T$ range over all subsets of the index set of the
matrix. Observe that for $M\in\Real^{n\times n}$ we have
\[
  \|M\|_{\square}\le \|M\|_1\le n\|M\|_F,
\]
where the second inequality follows from the Cauchy-Schwarz
inequality.  If we compare matrices of different size, it can be
reasonable to scale the norms by a factor depending on $n$. 

For
technical reasons, we only consider matrix norms $\|\cdot\|$ that are invariant under
permutations of the rows and columns, that is,
\begin{equation}
  \label{eq:11}
  \|M\|=\|MP\|=\|QM\|\quad\text{for all permutation matrices }P,Q.
\end{equation}
It is easy to see that the norms discussed above have this property.

Now let $G,H$ be graphs with vertex
sets $V,W$ and adjacency matrices $A\in\Real^{V\times V},
B\in\Real^{W\times W}$. For convenience, let us assume that
$|G|=|H|=:n$.
Then both $A,B$ are $n\times n$-matrices, and we can compare them
using a matrix norm. However, it does not make much
sense to just consider $\|A-B\|$, because graphs do not have a unique
adjacency matrix, and even if $G$ and $H$ are isomorphic, $\|A-B\|$
may be large. Therefore, we align the two matrices in an optimal way
by permuting the rows and columns of $A$.  For a matrix norm $\|\cdot\|$, we define
a graph distance measure $\dist_{\|\cdot\|}$
\[
  \dist_{\|\cdot\|}(G,H):=\min_{\substack{P\in\{0,1\}^{V\times
        W}\\\text{permutation matrix}}}\|P^{\top}AP-B\|.
\]
It follows from \eqref{eq:11} that 
$\dist_{\|\cdot\|}$ is well-defined, that is, does not depend on the
choice of the particular adjacency matrices $A,B$. It also follows
from \eqref{eq:11} and that fact that $P^{-1}=P^\top$ for permutation
matrices that
\begin{equation}
  \label{eq:12}
  \dist_{\|\cdot\|}(G,H)=\min_{\substack{P\in\{0,1\}^{V\times
        W}\\\text{permutation matrix}}}\|AP-PB\|,
\end{equation}
which is often easier to work with because the expression  $AP-PB$ is
linear in the ``variables'' $P_{ij}$. To simplify the notation, we let
$\dist_{p}:=\dist_{\|\cdot\|_p}$ and $\dist_{\angles
  p}:=\dist_{\|\cdot\|_{\angles p}}$ for all $p$, and we let
$\dist_{\square}:=\dist_{\|\cdot\|_\square}$.

The distances defined from the $\ell_1$-norm have natural
interpretations as edit distances. 
$\dist_1(G,H)$ is twice the number of edges that need to be
flipped to turn $G$ into a graph isomorphic to $H$, and
$\dist_{\angles1}(G,H)$ is the maximum number of edges incident with a
single vertex we need to flip to turn $G$ into a graph isomorphic to
$H$. Formally,
\begin{align}
\label{eq:40}
&\dist_{1}(G,H)=2\min_{\substack{f:V\to W\\
    \text{bijection}}}\Big|\big\{f(v)f(v')\bigmid vv'\in
  E(G)\big\}\triangle E(H)\Big|.\\
\notag
&\dist_{\angles1}(G,H)=\\
\label{eq:41}
&\hspace{5mm}\min_{\substack{f:V\to W\\
    \text{bijection}}}\max_{v\in
  V}\Big|\big\{f(v')\bigmid v'\in N_G(v)\big\}\triangle 
\big\{w'\bigmid w\in N_H(f(v))\big\}\Big|.
\end{align}
Here $\triangle$ denotes the symmetric difference. Equation
\eqref{eq:40} is obvious; the factor '$2$'
comes from the fact that we regard (undirected) edges as 2-element
sets and not as ordered pairs. To prove \eqref{eq:41}, we observe that for every matrix
$M\in\Real^{n\times n}$ it holds that
$
  \|M\|_{\angles1}=\max_{j\in[n]}\sum_{i=1}^n|M_{ij}|.
$

Despite these intuitive
interpretations, it is debatable how much ``semantic relevance''
these distance measures have. How similar are two graphs that can be
transformed into each other by flipping, say, 5\% of the edges? Again,
the answer to this question may depend on the application context.

A big disadvantage the graph distance measures based on matrix norms
have is that computationally they are highly intractable (see, for
example, \cite{arvkobkuhvas12} and the references therein). It is even
NP-hard to compute the distance between two trees (see
\cite{groratwoe18} for Frobenius distance and \cite{ger18} for the
distances based on operator norms), and distances are hard to
approximate. The problem of computing these distances is related to
the maximisation version of the quadratic assignment problem (see
\cite{makmansvi14,nagsvi09}), a notoriously hard combinatorial
optimisation problem. Better behaved is the cut-distance $\dist_{\square}$; at
least it can be approximated within a factor of $2$ \cite{alonao06}.

The main source of hardness is the minimisation over the
unwieldy set of all permutations (or permutation matrices). To
alleviate this hardness, we can relax the integrality constraints and
minimise over the convex set of all doubly stochastic matrices
instead. That is, we define a relaxed distance measure
\begin{equation}
  \label{eq:12}
  \widetilde{\dist}_{\|\cdot\|}(G,H)=\min_{\substack{X\in\{0,1\}^{V\times
        W}\\\text{doubly stochastic}}}\|AX-XB\|.
\end{equation}
Note that $\widetilde{\dist}_{\|\cdot\|}$ is only a pseudo-metric: the
distance between nonisomorphic graphs may be $0$. Indeed, it follows
from Theorem~\ref{theo:fractional} that $\widetilde{\dist}_{\|\cdot\|}(G,H)=0$ if and only if $G$ and $H$ are fractionally isomorphic. The
advantage of these ``relaxed'' distances is that for many
norms $\|\cdot\|$, computing $\widetilde{\dist}_{\|\cdot\|}$ is a
convex minimisation problem that can be solved efficiently.

So far, we have only discussed distance measures for graphs of the
same order. To extend these distance measures to arbitrary graphs, we
can replace vertices by sets of identical vertices in both graphs to
obtain two graphs whose order is the least common multiple of the
orders of the two initial graphs (see \cite[Section~8.1]{lov12} for
details).

Note that these matrix based similarity measures are only defined for
(possibly weighted) graphs. In particular for the operator norms, it
is not clear how to generalise them to relational structures, and if
such a generalisation would even be meaningful.

\subsection{Comparing Homomorphism Distances and Matrix Distances}

It would be very nice if we could establish a connection between graph
distance measures based on homomorphism vectors and those based on
matrix norms. At least one important result in this direction exists:
Lov{\'a}sz \cite{lov12} proves an equivalence between the cut-distance of
graphs and a distance measure derived from a suitably scaled
homomorphism vector $\Hom_{\CG}$.

It is tempting to ask if a similar correspondence can be established
between $\widetilde{\dist}_{\|\cdot\|}$ and $\Hom_{\CF}$. There are
many related question that deserve further attention.

\section{Concluding Remarks}
In this paper, we gave an overview of embeddings techniques for graphs
and relational structures. Then we discussed two related theoretical
approaches, the Weisfeiler-Leman algorithm with its various
ramifications and homomorphism vectors.
We saw that they
have a rich and beautiful theory that leads to new, generic
families of vector embeddings and helps us to get a better
understanding of some of the techniques used in practice, for example
graph neural networks.

Yet we have also seen that we are only at the beginning and many
questions remain open, in particular when it comes to similarity
measures defined on graphs and relational structures.

From a database perspective, it will be important to generalise the
embedding techniques to relations of higher arities, which is not as
trivial as it may seem (and where surprisingly little has been done so
far).  A central question is then how to query the embedded
data. Which queries can we answer at all when we only see the vectors
in latent space? How do imprecisions and variations due to randomness
affect the outcome of such query answers? Probably, we can only answer
queries approximately, but what exactly is the semantics of such
approximations? These are just a few questions that need to be
answered, and I believe they offer very exciting research opportunities
for both theoreticians and practitioners.

\subsection*{Acknowledgements}
This paper was written in strange times during COVID-19 lockdown. I
appreciate it that some of my colleagues nevertheless took the time to
answer various questions I had on the topics covered here and to
give valuable feedback on an earlier version of this paper. In
particular, I would like to thank Pablo Barcel\'o, Neta Friedman,
Benny Kimelfeld,
Christopher Morris, Petra Mutzel, Martin Ritzert, and
Yufei Tao.

\end{document}